\newtheorem{propo}{Proposition}[section]
\newtheorem{lemma}[propo]{Lemma}
\newtheorem{definition}[propo]{Definition}
\newtheorem{coro}[propo]{Corollary}
\newtheorem{thm}[propo]{Theorem}
\newcommand{\aj}[1]{\textcolor{red}}
\def\cL{{\mathcal{L}}}
\def\hG{\widehat{G}}
\def\hH{\widehat{H}}
\def\hTh{\widehat{\Theta}}
\def\P{P}
\def\R{R}
\def\d{\rm{d}}
\def\eps{\epsilon}
\def \E{\mathbb{E}}
\def\reals{\mathbb{R}}
\def\dist{{\sf d}}
\def\alg{{\sc Algorithm }}
\def\trans{{\sf T}}
\def\tL{\widetilde{L}}
\def\normal{{\sf N}}
\def\cN{{\cal N}}
\def\tA{\widetilde{A}}
\def\tB{\widetilde{B}}
\def\tTheta{\widetilde{\Theta}}
\def\F{\mathcal{F}}
\def\P{\mathbb{P}}
\def\prob{\mathbb{P}}
\def\ind{\mathbb{I}}
\def\Cmin{C_{\rm min}}
\def\polylog{{\rm polylog}}
\def\i{^{(i)}}
\title{Efficient Reinforcement Learning for High Dimensional Linear Quadratic Systems}
\author{
Morteza Ibrahimi
\\
Stanford University\\
Stanford, CA 94305 \\
\texttt{ibrahimi@stanford.edu} \\
\And
Adel Javanmard \\
Stanford University\\
Stanford, CA 94305 \\
\texttt{adelj@stanford.edu} \\
\And
Benjamin Van Roy \\
Stanford University\\
Stanford, CA 94305 \\
\texttt{bvr@stanford.edu} 
}
\begin{document}

\maketitle

%
%
\begin{abstract}
We study the problem of adaptive control of a high dimensional linear quadratic (LQ) system. 
Previous work established the asymptotic convergence to an optimal controller for various adaptive control schemes.
More recently, for the average cost LQ problem, a regret bound of ${O}(\sqrt{T})$ was shown, apart form logarithmic factors.
However, this bound scales exponentially with $p$, the dimension of the state space.
In this work we consider the case where the matrices describing the dynamic of the LQ system are sparse and their dimensions are large.
We present an adaptive control scheme that achieves a regret bound of ${O}(p \sqrt{T})$, apart from logarithmic factors.
In particular, our algorithm has an average cost of $(1+\eps)$ times the optimum cost after $T = \polylog(p) O(1/\eps^2)$.
This is in comparison to previous work on the dense dynamics where the algorithm requires time that scales exponentially with dimension in order to achieve regret of $\eps$ times the optimal cost.

We believe that our result has prominent applications in the emerging area of computational advertising, in particular targeted online advertising and advertising in social networks.
\end{abstract}

\section{Introduction}
In this paper we address the problem of adaptive control of a high dimensional linear quadratic (LQ) system. 
Formally, the dynamics of a linear quadratic system are given by
\begin{eqnarray}\label{eq:lqr_dynamics}
x(t+1) &=& A^0 x(t) + B^0 u(t) + w(t+1), \nonumber\\
c(t) &=& x(t)^T Qx(t) + u(t)^T R u(t),
\end{eqnarray} 
where $u(t) \in \reals^r$ is the control (action) at time $t$, $x(t) \in \reals^p$ is the state at time $t$, $c(t) \in \reals$ is the cost at time $t$, and $\{w(t+1)\}_{t\ge0}$ is a sequence of random vectors in $\reals^p$ with i.i.d. standard Normal entries. 
The matrices $Q\in \reals^{p \times p}$ and $R \in \reals^{ r \times r}$ are positive semi-definite (PSD) matrices that determine the cost at each step. 
The evolution of the system is described through the matrices $A^0 \in \reals^{p \times p}$ and $B^0 \in \R^{p \times r}$.
Finally by high dimensional system we mean the case where $p, r \gg 1$.

A celebrated fundamental theorem in control theory asserts that the above LQ system can be optimally controlled by a simple linear feedback if the pair $(A^0,B^0)$ is controllable and the pair $(A^0,Q^{1/2})$ is observable. 
The optimal controller can be explicitly computed from the matrices describing the dynamics and the cost. 
Throughout this paper we assume that controllability and observability conditions hold.

When the matrix $\Theta^0 \equiv [A^0, B^0]$ is unknown, the task is that of adaptive control, where the system is to be learned and controlled at the same time.
Early works on the adaptive control of LQ systems relied on the \emph{certainty equivalence principle} \cite{bar1974dual}.
In this scheme at each time $t$ the unknown parameter $\Theta^0$ is estimated based on the observations collected so far and the optimal controller for the estimated system is applied. 
Such controllers are shown to converge to an optimal controller in the case of minimum variance cost, however, in general they may converge to a suboptimal controller \cite{guo1991aastrom}.
Subsequently, it has been shown that introducing random exploration by adding noise to the control signal, e.g., \cite{lai1982least}, solves the problem of converging to suboptimal estimates. 
%

All the aforementioned work have been concerned with the asymptotic convergence of the controller to an optimal controller. 
In order to achieve regret bounds, cost-biased parameter estimation \cite{kumar1982new, campi1997achieving, abbasiregret}, in particular the optimism in the face of uncertainty (OFU) principle \cite{lai1985asymptotically} has been shown to be effective.
In this method a \emph{confidence set} $S$ is found such that $\Theta^0 \in S$ with high probability.
The system is then controlled using the \emph{most optimistic} parameter estimates, i.e., $\hTh \in S$ with the smallest optimum cost.
The asymptotic convergence of the average cost of OFU for the LQR problem was shown in \cite{bittanti2006adaptive}.
This asymptotic result was extended in \cite{abbasiregret} by providing a bound for the cumulative regret. 
Assume $x(0) = 0$ and for a control policy $\pi$ define the average cost
\vspace{-5pt}
\begin{eqnarray}
J_{\pi} = \underset{T \to \infty}{\rm limsup}\, \frac{1}{T}\sum_{t=0}^T \E[c_t]\,.
\end{eqnarray}
Further, define the cumulative regret as 
\vspace{-5pt}
\begin{eqnarray}
R(T) = \sum_{t=0}^{T} (c_{\pi}(t) - J_*)\,, 
\end{eqnarray}
where $c_{\pi}(t)$ is the cost of control policy $\pi$ at time $t$ and $J_* = J(\Theta_0)$ is the optimal  average cost.
The algorithm proposed in \cite{abbasiregret} is shown to have cumulative regret $\tilde{O}(\sqrt{T})$ where $\tilde{O}$ is hiding the logarithmic factors.
While no lower bound was provided for the regret, comparison with the multi-armed bandit problem where a lower bound of $O(\sqrt{T})$ was shown for the general case \cite{dani2008stochastic}, suggests that this scaling with time for the cumulative regret is optimal.


The focus of~\cite{abbasiregret} was on scaling of the regret with time horizon $T$. However, the regret of the proposed algorithm scales poorly with dimension. More specifically, the analysis in~\cite{abbasiregret} proves a regret bound of $R(T) < Cp^{p+r+2} \sqrt{T}$. The current paper focuses on (many) applications where the state and control dimensions are much larger than the time horizon of interest.
A powerful reinforcement learning algorithm for these applications should have regret which depends gracefully on dimension. 
In general, there is little to be achieved when $T < p$ as the number of \emph{degrees of freedom} ($pr+p^2$) is larger than the number of observations ($Tp$) and any estimator can be arbitrary inaccurate.
However, when there is prior knowledge about the unknown parameters $A^0, B^0$, \emph{e.g.}, when $A^0, B^0$ are \emph{sparse}, accurate estimation can be feasible.
In particular, \cite{bento2010learning} proved that under suitable conditions the unknown parameters of a noise driven system (i.e., no control) whose dynamics are modeled by linear stochastic differential equations can be estimated accurately with as few as $O(\log(p))$ samples. 
However, the result of \cite{bento2010learning} is not directly applicable here since for a general feedback gain $L$ even if $A^0$ and $B^0$ are sparse, the closed loop gain $A^0-B^0 L$ need not be sparse. 
Furthermore, system dynamics would be correlated with past observations through the estimated gain matrix $L$. Finally, there is no notion of cost in~\cite{bento2010learning} while here we have to obtain bounds on cost and its scaling with p.
In this work we extend the result of \cite{bento2010learning} by showing that under suitable conditions, unknown parameters of sparse high dimensional LQ systems can be accurately estimated with as few as $O(\log(p+r))$ observations.
Equipped with this efficient learning method, we show that sparse high dimensional LQ systems can be adaptively controlled with regret $\tilde{O}(p \sqrt{T})$.

To put this result in perspective note that even when $x(t) = 0$, the expected cost at time $t+1$ is $\Omega(p)$ due to the noise. 
Therefore, the cumulative cost at time $T$ is bounded as $\Omega(pT)$.
Comparing this to our regret bound, we see that for $T = \polylog(p) {O}(\frac{1}{\eps^2})$, the cumulative cost of our algorithm is bounded by $(1+\eps)$ times the optimum cumulative cost.
In other words, our algorithm performs close to optimal after $\polylog(p)$ steps.
This is in contrast with the result of \cite{abbasiregret} where the algorithm needs $\Omega(p^{2p})$ steps in order to achieve regret of $\eps$ times the optimal cost.

Sparse high dimensional LQ systems appear in many engineering applications. 
Here we are particularly motivated by an emerging field of applications in marketing and advertising. 
The use of dynamical optimal control models in advertising has a history of at least four decades, cf. \cite{sethi1977dynamic, feichtinger1994dynamic} for a survey.
In these models, often a partial differential equation is used to describe how advertising expenditure over time translates into sales.
The basic problem is to find the advertising expenditure that maximizes the net profit.
%
The focus of these works is to model the temporal dynamics of the advertising expenditure (the control variable) and the variables of interest (sales, goodwill level, etc.).
There also exists a rich literature studying the \emph{spatial} interdependence of consumers' and firms' behavior to devise marketing schemes \cite{bradlow2005spatial}.
In these models space can be generalized beyond geographies to include notions like demographies and psychometry.

Combination of spatial interdependence and temporal dynamics models for optimal advertising was also considered \cite{seidman1987dynamics, marinelli2008optimal}.
A simple temporal dynamics model is extended in \cite{marinelli2008optimal} by allowing state and control variables to have spatial dependence and introducing a diffusive component in the controlled PDE which describes the spatial dynamics.
The controlled PDE is then showed to be equivalent to an abstract linear control system of the form
\begin{equation}
  \frac{\d x(t)}{\d t} =  A x(t) + B u(t).
  \label{eq:noiseless_LQR}
\end{equation}
%

Both \cite{marinelli2008optimal} and \cite{bradlow2005spatial} are concerned with the optimal control and the interactions are either dictated by the model or assumed known.
Our work deals with a discrete and noisy version of \eqref{eq:noiseless_LQR} where the dynamics is to be estimated but is known to be sparse.
In the model considered in \cite{marinelli2008optimal} the state variable $x$ lives in an infinite dimensional space. 
Spatial models in marketing \cite{bradlow2005spatial} usually consider state variables which have a large number of dimensions, e.g., number of zip codes in the US ($\sim 50$K).
High dimensional state space and control is a recurring theme in these applications.

In particular, with the modern social networks customers are classified in a highly granular way, potentially with each customer representing his own class.
With the number of classes and complexity of their interactions, its unlikely that we could formulate an effective model a priori for how classes interact.  Further, the nature of these interactions change over time with the changing landscape of Internet services and information available to customers.
This makes it important to efficiently learn from real-time data about the nature of these interactions.


%
%

\noindent {\bf Notation:}
We bundle the unknown parameters into one variable $\Theta^0 = [A^0, B^0] \in \reals^{p\times q}$ where $q = p+r$ and call it the interaction matrix.
For $v \in \reals^n$, $M \in \reals^{m \times n}$ and $p \ge 1$, we denote by $\|v\|_p$ the standard p-norm and by $\|M\|_p$ the corresponding operator norm. For $1\le i \le m$,
$M_i$ represents the $i^{th}$ row of matrix $M$.
For $S \subseteq [m], J \subseteq [n]$, $M_{SJ}$ is the submatrix of $M$ formed by the rows in $S$ and columns in $J$. 
For a set $S$ denote by $|S|$ its cardinality. For an integer $n$ denote by $[n]$ the set $\{1, \dotsc, n\}$.
\section{Algorithm}
Our algorithm employs the \emph{Optimism in the Face of Uncertainty} (OFU) principle in an episodic fashion. 
At the beginning of episode $i$ the algorithm constructs a \emph{confidence set} $\Omega\i$ which is guaranteed to include the unknown parameter $\Theta^0$ with high probability.
The algorithm then chooses $\tTheta\i \in \Omega\i$ that has the smallest expected cost as the estimated parameter for episode $i$ and applies the optimal control for the estimated parameter during episode $i$.

The confidence set is constructed using observations from the last episode only but the length of episodes are chosen to increase geometrically allowing for more accurate estimates and shrinkage of the confidence set by a constant factor at each episode.
The details of each step and the pseudo code for the algorithm follows.


{\bf Constructing confidence set:}
Define $\tau_i$ to be the start of episode $i$ with $\tau_0 = 0$.
Let $L\i$ be the controller that has been chosen for episode $i$.
For $ t \in [\tau_i , \tau_{i+1})$ the system is controlled by $u(t) =  - L\i x(t)$ and the system dynamics can be written as
%
$x(t+1) = (A^0 - B^0 L\i) x(t) + w(t+1). $
%
At the beginning of episode $i+1$, first an initial estimate $\hTh$ is obtained by solving the following convex optimization problem for each row $\Theta_u \in \reals^q$ separately:
\begin{eqnarray}\label{eq:theta_hat_def}
  \hTh_u^{(i+1)} \in \text{argmin} \,\, \mathcal{L}(\Theta_u) + \lambda \|\Theta_u\|_1, 
\end{eqnarray}
where 
\begin{eqnarray}\label{eq:likelihood_function}
\mathcal{L}(\Theta_u) = \frac{1}{2 \Delta \tau_{i+1}} \sum_{t=\tau_i}^{\tau_{i+1}-1} \{x_u(t+1) - \Theta_{u} \tL^{(i)} x(t)\}^2, \quad \Delta\tau_{i+1} = \tau_{i+1} - \tau_i,
\end{eqnarray}
and $\tL\i = [I, - L^{(i)\trans}]^\trans$. 
The estimator $\hTh_u$ is known as the LASSO estimator. 
The first term in the cost function is the normalized negative log likelihood which measures the fidelity to the observations while the second term imposes the sparsity constraint on $\Theta_u$. 
$\lambda$ is the regularization parameter.

For $\Theta^{(1)}, \Theta^{(2)} \in \reals^{p\times q}$ define the distance $\dist(\Theta^{(1)}, \Theta^{(2)})$ as
\begin{equation}
\dist(\Theta^{(1)}, \Theta^{(2)}) = \underset{u\in[p]}{\max}\, \|\Theta^{(1)}_u - \Theta^{(2)}_u\|_2,
\end{equation}
where $\Theta_u$ is the $u^{th}$ row of the matrix $\Theta$.
It is worth noting that for $k$-sparse matrices with $k$ constant, this distance does not scale with $p$ or $q$.
In particular, if the absolute value of the elements of $\Theta^{(1)}$ and $\Theta^{(2)}$ are bounded by $\Theta_{\max}$ then $\dist(\Theta^{(1)}, \Theta^{(2)}) \le 2\sqrt{k} \Theta_{\max}$.

Having the estimator $\hTh\i$ the algorithm constructs the confidence set for episode $i$ as 
\begin{equation}
  \Omega\i = \{\Theta \in \reals^{p\times q} \,|\; \dist(\Theta, \hTh\i) \le 2^{-i} \eps \},
\end{equation}
where $\eps >0$ is an input parameter to the algorithm.
For any fixed $\delta > 0 $, by choosing $\tau_i$ judiciously we ensure that with probability at least $1-\delta$, $\Theta^0 \in \Omega\i$, for all $i \ge 1$. (see Theorem~\ref{thm:learning}). 


\begin{algorithm}[t]
\caption*{{\sc Algorithm}:  Reinforcement learning algorithm for LQ systems.}
\begin{algorithmic}[1]

\REQUIRE Precision $\eps$, failure probability $4\delta$, initial $(\rho, C_{\min},\alpha)$ identifiable controller $L^{(0)}$, $\ell(\Theta^0,\eps)$

\ENSURE Series of estimates $\tTheta\i$, confidence sets $\Omega\i$ and controllers $L\i$

\STATE Let $\ell_0 = \max(1,\max_{j\in [r]} \|L_j^{(0)}\|_2)$, and
\begin{align*}
n_0 &= \frac{4\cdot 10^3\, k^2\ell_0^2}{\alpha(1-\rho) C_{\min}^2} \left( \frac{1}{\eps^2} + \frac{k}{(1-\rho)^2}\right) \log(\frac{4kq}{\delta}),\\
n_1 &= \frac{4\cdot 10^3\, k^2\ell(\Theta^0,\eps)^2}{(1-\rho) C_{\min}^2} \left( \frac{1}{\eps^2} + \frac{k}{(1-\rho)^2}\right) \log(\frac{4kq}{\delta}).
\end{align*}

Let $\Delta \tau_0 = n_0$, $\Delta \tau_i  = 4^i(1+i/\log(q/\delta)) n_1$ for $i\ge 1$, and $\tau_i = \sum_{j=0}^{i} \Delta \tau_j$.

\FOR{$i = 0,1,2, \dots$}

\STATE Apply the control $u(t) = -L\i x(t)$ until $\tau_{i+1}-1$ and observe the trace $\{x(t)\}_{\tau_i\le t < \tau_{i+1}}$.

\STATE Calculate the estimate $\hTh^{(i+1)}$ from \eqref{eq:theta_hat_def} and construct the confidence set $\Omega^{(i+1)}$.

\STATE Calculate $\tTheta^{(i+1)}$ from \eqref{eqn:choiceoftheta} and set $L^{(i+1)} \leftarrow L(\tTheta^{(i+1)})$.

\ENDFOR

\end{algorithmic}
\end{algorithm}

{ \bf Design of the controller:}
Let $J(\Theta)$ be the minimum expected cost if the interaction matrix is $\Theta = [A, B]$ and denote by $L(\Theta)$ the optimal controller that achieves the expected cost $J(\Theta)$.
The algorithm implements OFU principle by choosing, at the beginning of episode $i$, an estimate $\tTheta\i \in \Omega\i$ such that
\begin{eqnarray}\label{eqn:choiceoftheta}
  \tTheta\i \in \underset{\Theta \in \Omega\i}{\text{argmin}}\, J(\Theta).
\end{eqnarray}

The optimal control corresponding to $\tTheta\i$ is then applied during episode $i$, i.e., $u(t) = -L(\tTheta\i) x(t)$ for $ t \in [\tau_i, \tau_{i+1})$. 
Recall that for $\Theta = [A, B]$, the optimal controller is given through the following relations
\begin{align*}
K(\Theta) &= Q + A^\trans K(\Theta) A - A^\trans K(\Theta)B( B^\trans K(\Theta)B + R)^{-1} B^\trans K(\Theta)A\,,\;\; \text{(Riccati equation)}\\
L(\Theta)  &=(B^\trans K(\Theta)B+R)^{-1} B^\trans K(\Theta)A\,.
\end{align*}
%
The pseudo code for the algorithm is summarized in the table.

\section{Main Results}

In this section we present performance guarantees in terms of cumulative regret and learning accuracy for the presented algorithm. In order to state the theorems, we first need to present some assumptions on the system.
 
Given $\Theta \in \reals^{p \times q}$ and $L \in \reals^{r\times p}$, define $\tL = [I,-L^\trans]^\trans\in \reals^{q\times p}$ and let $\Lambda \in \reals^{p\times p}$ be a solution to the following Lyapunov equation
\begin{equation}\label{eq:lyapunov_equation}
\Lambda - \Theta \tL \Lambda \tL^\trans \Theta^\trans  = I. 
\end{equation}
If the closed loop system $(A^0-B^0 L)$ is stable then the solution to the above equation exists and the state vector $x(t)$ has a Normal stationary distribution with covariance $\Lambda$.

We proceed by introducing an \textit{identifiable regulator}. 
\begin{definition}\label{def:identifiable-reg}
For a $k$-sparse matrix $\Theta^0 = [A^0, B^0] \in \reals^{p\times q}$  and $L \in \reals^{r\times p}$, define $\tL = [ I, -L^\trans]^\trans \in \reals ^{q\times p}$ and let $H =\tL \Lambda \tL^\trans $ where $\Lambda$ is the solution of Eq. \eqref{eq:lyapunov_equation} with $\Theta = \Theta^0$. Define $L$ to be $(\rho, C_{\rm min}, \alpha)$ identifiable (with respect to $\Theta^0$) if it satisfies the following conditions for all $S \subseteq [q], \,|S|\le k$.
\begin{align*}
(1)\; \| A^0 - B^0 L\|_2 \le \rho < 1,  \quad
(2)\; \lambda_{\min}(H_{SS}) \ge C_{\min} , \quad
(3)\; \|H_{S^c S} H_{SS}^{-1} \|_{\infty} \le 1-\alpha .
\end{align*}
\end{definition}
The first condition simply states that if the system is controlled using the regulator $L$ then the closed loop autonomous system is asymptotically stable. 
The second and third conditions are similar to what is referred to in the sparse signal recovery literature as the \textit{mutual incoherence} or \emph{irreprepresentable} conditions. 
Various examples and results exist for the matrix families that satisfy these conditions~\cite{tropp2006just}.
Let $S$ be the set of indices of the nonzero entries in a specific row of $\Theta^0$. 
The second condition states that the corresponding entries in the extended state variable $ y = [x^\trans, u^\trans]$ are sufficiently distinguishable from each other.
In other words, if the trajectories corresponding to this group of state variables are observed, non of them can be \emph{well approximated} as a linear combination of the others.
The third condition can be thought of as a quantification of the first vs. higher order dependencies.
Consider entry $j$ in the extended state variable. 
Then, the dynamic of $y_j$ is directly influenced by entries $y_S$.
However they are also influenced indirectly by other entries of $y$. 
The third condition roughly states that the indirect influences are sufficiently weaker than the direct influences.
There exists a vast literature on the applicability of these conditions and scenarios in which they are known to hold.
These conditions are \emph{almost} necessary for the successful recovery by $\ell_1$ relaxation.
For a discussion on these and other similar conditions imposed for sparse signal recovery we refer the reader to \cite{wainwright2009sharp} and \cite{zhao} and the references therein.

Define $\Theta_{\min} = \min_{i \in [p], j\in [q], \Theta^0_{ij} \neq 0} |\Theta^0_{ij}|$. Our first result states that the system can be learned efficiently from its trajectory observations when it is controlled by an identifiable regulator. 
\begin{thm}\label{thm:learning}
  Consider the LQ system of Eq. \eqref{eq:lqr_dynamics} and assume $\Theta^0 = [A^0, B^0]$ is $k$-sparse.
  Let $u(t) = -L x(t)$ where $L$ is a $(\rho, C_{\rm min}, \alpha)$ identifiable regulator with respect to $\Theta^0$ and define $\ell = \max(1, \max_{j \in [r]}\|L_j\|_2)$. 
Let $n$ denote the number of samples of the trajectory that is observed. 
  For any $ 0 < \eps < \min(\Theta_{\min}, \frac{\ell}{2}, \frac{3}{1-\rho})$, there exists $\lambda$ such that, if 
\begin{equation}
  n \ge \frac{4\cdot 10^3\, k^2 \ell^2}{\alpha^2 (1-\rho) \Cmin^2} \left(\frac{1}{\eps^2}+\frac{k}{(1-\rho)^2} \right) \log(\frac{4kq}{\delta})\,,
  \label{eq:bdd_n_all}
\end{equation}
then the $\ell_1$-regularized least squares solution $\hTh$ of Eq. \eqref{eq:theta_hat_def} satisfies $\dist(\hTh, \Theta^0) \le \epsilon$ with probability larger than $1-\delta$. In particular, this is achieved by taking $\lambda = 6 \ell \sqrt{\log(4q/\delta)/(n\alpha^2 (1-\rho))}$ .
\end{thm}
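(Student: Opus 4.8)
The plan is to analyze the $p$ per-row LASSO problems in \eqref{eq:theta_hat_def} separately and combine them by a union bound, so I fix a row $u\in[p]$ and study the regression $x_u(t+1) = \Theta^0_u\,\tL x(t) + w_u(t+1)$. Writing $y(t)=\tL x(t)$ for the extended state, $Y$ for the $n\times q$ design matrix with rows $y(t)^\trans$, and $W$ for the noise vector $(w_u(t+1))_t$, this is a standard sparse linear regression whose population design covariance is exactly $H=\tL\Lambda\tL^\trans=\E[y(t)y(t)^\trans]$ under the stationary law. The key structural observation is that, with respect to the filtration generated by $\{w(s)\}_{s\le t}$, the feature $y(t)$ is measurable while $w_u(t+1)$ is a mean-zero standard Gaussian independent of the past; hence each cross term $y_j(t)\,w_u(t+1)$ is a martingale difference. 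This is precisely the structure that circumvents the correlation between the design and the estimated gain noted in the introduction.

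First I would run a Wainwright-type primal--dual witness argument. Let $S=\mathrm{supp}(\Theta^0_u)$, so $|S|\le k$. I construct the oracle estimator $\hTh_{u,S}$ by restricting the LASSO to $S$ and setting the off-support coordinates to zero, then verify the stationarity (KKT) conditions of the full problem. Writing $\hat H=\tfrac1n Y^\trans Y$ for the empirical covariance, the on-support error solves
\[
\hTh_{u,S}-\Theta^0_{u,S}=(\hat H_{SS})^{-1}\Big(\tfrac1n Y_S^\trans W-\lambda\,\hat z_S\Big),
\]
and the off-support dual certificate is
\[
\hat z_{S^c}=\hat H_{S^c S}(\hat H_{SS})^{-1}\hat z_S+\tfrac1\lambda\Big(\tfrac1n Y_{S^c}^\trans W-\hat H_{S^c S}(\hat H_{SS})^{-1}\tfrac1n Y_S^\trans W\Big).
\]
Success reduces to two deterministic requirements: strict dual feasibility $\|\hat z_{S^c}\|_\infty<1$ (which guarantees $\hTh_u$ is supported on $S$), and the $\ell_2$-bound $\|\hTh_{u,S}-\Theta^0_{u,S}\|_2\le\eps$. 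Both follow once (i) the empirical covariance inherits a restricted-eigenvalue bound $\lambda_{\min}(\hat H_{SS})\ge\Cmin/2$ and an incoherence bound $\|\hat H_{S^c S}(\hat H_{SS})^{-1}\|_\infty\le 1-\alpha/2$ from the population conditions of Definition~\ref{def:identifiable-reg}, and (ii) the noise terms $\|\tfrac1n Y_S^\trans W\|_\infty$ and $\|\tfrac1n Y_{S^c}^\trans W\|_\infty$ are at most a constant multiple of $\alpha\lambda$.

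The two probabilistic inputs are where the dynamical, non-i.i.d.\ nature of the data must be confronted. For the noise terms, since $\tfrac1n Y^\trans W$ is a sum of martingale differences with conditionally Gaussian structure, I would bound its tails by a Bernstein/sub-exponential martingale inequality, controlling the predictable quadratic variation $\tfrac1{n^2}\sum_t y_j(t)^2$ by $\tfrac1n H_{jj}$ up to fluctuations. For covariance concentration, $\{x(t)\}$ is a stable Gaussian autoregression: stability $\|A^0-B^0L\|_2\le\rho<1$ forces geometric decay of correlations, giving $\|\Lambda\|_2=O(1/(1-\rho^2))$ and hence $H_{jj}=O(\ell^2/(1-\rho))$. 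I would therefore establish concentration of each entry $\hat H_{jk}$ around $H_{jk}$ via a Hanson--Wright-type bound for quadratic forms in the jointly Gaussian vector $(y(\tau_i),\dots,y(\tau_{i+1}-1))$, whose covariance has operator norm controlled by the same geometric series. Union bounding these tail estimates over the $O(kq)$ entries in the eigenvalue and incoherence conditions, over the $q$ noise coordinates, and over the $p\le q$ rows produces the $\log(4kq/\delta)$ factor.

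The main obstacle is obtaining the correct $(1-\rho)$ and $\alpha$ dependence in the sample complexity while handling the temporal correlations, i.e.\ turning the population guarantees of Definition~\ref{def:identifiable-reg} into their empirical counterparts with explicit constants. Once concentration is in force, the choice $\lambda=6\ell\sqrt{\log(4q/\delta)/(n\alpha^2(1-\rho))}$ simultaneously dominates the noise (securing strict dual feasibility, where the incoherence slack $\alpha$ is essential) and keeps $\lambda\sqrt k/\Cmin$ small; the $\ell_2$ bound $\|\hTh_{u,S}-\Theta^0_{u,S}\|_2\le\tfrac{2}{\Cmin}(\|\tfrac1n Y_S^\trans W\|_2+\lambda\sqrt k)$ then yields $\dist(\hTh,\Theta^0)\le\eps$ exactly when $n$ exceeds the stated threshold. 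The two summands $\tfrac{1}{\eps^2}$ and $\tfrac{k}{(1-\rho)^2}$ correspond respectively to driving the estimation error below $\eps$ and to the burn-in variance needed for the covariance and support-recovery events; the hypothesis $\eps<\Theta_{\min}$ is what ensures the recovered support coincides with $S$ without sign errors.
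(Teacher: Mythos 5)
Your proposal is correct and follows essentially the same route as the paper: the primal--dual witness reduction you derive by hand is exactly what the paper imports as Proposition~\ref{th:cond_to_hold} (the conditions on $\|\hG\|_\infty$, $\|\hG_S\|_\infty$, and $\|\hH_{JS}-H_{JS}\|_\infty$, with the population eigenvalue and incoherence bounds supplied by identifiability), and the concentration-plus-union-bound step and the choice of $\lambda$ match Lemmas~\ref{th:gradie_prob_bound}--\ref{en:bddQJS} and the final assembly. The only cosmetic difference is that for the gradient term you invoke a martingale Bernstein inequality, whereas the paper unrolls the dynamics to write both $\hG_j$ and $\hH_{ij}$ as quadratic forms $\mathbf{w}^\trans M\mathbf{w}$ in the stacked i.i.d.\ Gaussian noise and applies a Chernoff bound using eigenvalue estimates from \cite{bento2010learning}; both yield the same tails.
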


Our second result states that equipped with an efficient learning algorithm, the LQ system of Eq. \eqref{eq:lqr_dynamics} can be controlled with regret $\tilde{O}(p\sqrt{T} \log^{\frac{3}{2}}({1}/{\delta}))$ under suitable assumptions. 

Define an $\epsilon$-neighborhood of $\Theta^0$ as $\cN_{\epsilon}(\Theta^0) = \{\Theta \in \reals^{p\times q} \,|\, \dist(\Theta^0, \Theta) \le \epsilon\}$. 
Our assumption asserts the identifiably of $L(\Theta)$ for $\Theta$ close to $\Theta^0$.

\noindent \textbf{Assumption:} There exist $\epsilon, C > 0$ such that for all $\Theta \in \cN_{\epsilon}(\Theta^0)$, $L(\Theta)$ is identifiable w.r.t. $\Theta^0$
and
\begin{align*}
\sigma_L(\Theta^0, \epsilon) = &\sup_{\Theta \in \cN_\epsilon(\Theta^0)} \|L(\Theta)\|_2 \le C, \quad
\sigma_K(\Theta^0, \epsilon) = \sup_{\Theta \in \cN_\epsilon(\Theta^0)} \|K(\Theta)\|_2 \le C. 
\end{align*}
Also define 
\begin{eqnarray*} 
\ell(\Theta^0, \epsilon) = \sup_{\Theta \in \cN_\epsilon(\Theta^0)} \max(1, \max_{j \in [r]}\|L_j(\Theta)\|_2)\,.
\end{eqnarray*}
Note that $\ell(\Theta^0, \epsilon) \le\max(C, 1)$, since $\max_{j \in [r]}\|L_j(\Theta)\|_2 \le \|L(\Theta)\|_2$. 

\begin{thm}\label{thm:exploitation}
Consider the LQ system of Eq. \eqref{eq:lqr_dynamics}. For some constants $\eps, C_{\rm min}$ and $0 < \alpha, \rho <1$, assume that an initial $(\rho, C_{\rm min}, \alpha)$ identifiable regulator $L^{(0)}$ is given. Further, assume that for any $\Theta \in \cN_{\epsilon}(\Theta^0)$, $L(\Theta)$ is $(\rho, C_{\rm min}, \alpha)$ identifiable. Then, with probability at least $1-\delta$ the cumulative regret of \alg (cf. the table) is bounded as
\begin{equation} 
  R(T) \le  \tilde{O}(p\sqrt{T} \log^{\frac{3}{2}}({1}/{\delta}))\,,
\end{equation}
where $\tilde{O}$ is hiding the logarithmic factors.
\end{thm}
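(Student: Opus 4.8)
The plan is to combine the estimation guarantee of Theorem~\ref{thm:learning} with an optimism-based regret decomposition built on the average-cost Bellman (Riccati) relation for the optimistic parameter, and then to control the resulting terms by concentration of quadratic forms of the state trajectory.

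First I would fix the high-probability event. By the standing Assumption, whenever $\tTheta\i\in\cN_\eps(\Theta^0)$ the controller $L\i=L(\tTheta\i)$ is $(\rho,\Cmin,\alpha)$ identifiable with respect to $\Theta^0$, so Theorem~\ref{thm:learning} applies in episode $i$. The length $\Delta\tau_i$ is chosen to be exactly the sample size demanded by~\eqref{eq:bdd_n_all} at target accuracy $2^{-(i+1)}\eps$, giving $\dist(\hTh^{(i+1)},\Theta^0)\le 2^{-(i+1)}\eps$, hence $\Theta^0\in\Omega^{(i+1)}$, with high probability; the factor $(1+i/\log(q/\delta))$ in $\Delta\tau_i$ is engineered precisely so that a union bound over all episodes still fails with probability at most $\delta$. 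On this good event $\dist(\tTheta\i,\Theta^0)\le\dist(\tTheta\i,\hTh\i)+\dist(\Theta^0,\hTh\i)\le 2^{1-i}\eps$, so $\tTheta\i\in\cN_\eps(\Theta^0)$ and the induction closes: identifiability is preserved and the confidence radius contracts geometrically. Finally, since $\Theta^0\in\Omega\i$ and $\tTheta\i$ minimizes $J$ over $\Omega\i$ by~\eqref{eqn:choiceoftheta}, optimism gives $J_i:=J(\tTheta\i)\le J(\Theta^0)=J_*$, so $R(T)\le\sum_t(c(t)-J_i)$ and the $J_*$-comparison is favorable.

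Next I would expand the per-step cost using the Bellman equation for $\tTheta\i=[\tA_i,\tB_i]$ with value matrix $K_i=K(\tTheta\i)$: under $u(t)=-L\i x(t)$ one has $c(t)=J_i+x(t)^\trans K_i x(t)-(\tTheta\i\tL\i x(t))^\trans K_i(\tTheta\i\tL\i x(t))-\Tr(K_i)$. Using $\E[x(t+1)^\trans K_i x(t+1)\mid\F_t]=(\Theta^0\tL\i x(t))^\trans K_i(\Theta^0\tL\i x(t))+\Tr(K_i)$, I would split $c(t)-J_i$ into a telescoping term $x(t)^\trans K_i x(t)-\E[x(t+1)^\trans K_i x(t+1)\mid\F_t]$, a martingale-difference term $x(t+1)^\trans K_i x(t+1)-\E[\cdot\mid\F_t]$, and a model-mismatch quadratic form $x(t)^\trans G_i x(t)$ with $G_i$ the symmetrization of $\tL^{(i)\trans}(\Theta^0-\tTheta\i)^\trans K_i(\Theta^0+\tTheta\i)\tL\i$. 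The telescoping pieces collapse to boundary terms bounded by $\|K_i\|_2\,\|x\|_2^2$; since $\|K_i\|_2\le C$, $\E\|x\|_2^2=\Tr(\Lambda)=O(p)$ under the stable closed loop, and there are only $O(\log T)$ episodes, they contribute $\tilde O(p)$. The martingale term is controlled by a Freedman/Azuma inequality: each increment is a centered Gaussian quadratic form of conditional variance $O(p)$, so over $T$ steps it concentrates at $\tilde O(p\sqrt T)$.

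The crucial piece is the model-mismatch term, and this is where the \emph{linear}-in-$p$ scaling is won. In expectation $\E[x^\trans G_i x]=\Tr(M\Delta)$ with $\Delta=\Theta^0-\tTheta\i$ and $M$ a product of matrices ($H_i$, $K_i$, $\Theta^0+\tTheta\i$) all of bounded operator norm. Writing $\Tr(M\Delta)=\sum_{b=1}^p\Delta_b\cdot M^{(b)}$ over the $p$ rows of $\Delta$ and columns of $M$ gives $|\Tr(M\Delta)|\le(\max_b\|\Delta_b\|_2)\sum_{b}\|M^{(b)}\|_2\le\dist(\Theta^0,\tTheta\i)\cdot p\,\|M\|_2=O(p\,2^{-i}\eps)$. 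Using the row-wise distance $\dist$ here, rather than the operator norm of $\Delta$ (which would cost an extra $\sqrt p$ via $\|\Delta\|_2\le\|\Delta\|_F\le\sqrt p\,\dist$), is exactly what improves the per-step bound from $p^{3/2}$ to $p$. Summing the dominant per-episode contribution $O(p\,2^{-i}\eps\,\Delta\tau_i)$ with $\Delta\tau_i\asymp 4^i n_1$ yields a geometric series $\asymp p\,\eps\,n_1\sum_i 2^i\asymp p\,\eps\,\sqrt{n_1 T}$ (since $4^{i_{\max}}n_1\asymp T$); absorbing the $\polylog$ factors in $n_1\propto\log(kq/\delta)$ together with the concentration failure budget produces the claimed $\tilde O(p\sqrt T\,\log^{3/2}(1/\delta))$, the $\log^{3/2}$ arising as $\sqrt{n_1}\propto\log^{1/2}(1/\delta)$ times an extra $\log(1/\delta)$ from the per-episode union/martingale tails.

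The main obstacle is this model-mismatch step: obtaining a high-probability (not merely in-expectation) bound on the realized sum $\sum_t x(t)^\trans G_i x(t)$ for a non-stationary, temporally correlated Gaussian trajectory whose stationary covariance $\Lambda_i$ itself depends on the data-dependent controller $L\i$, while simultaneously shaving the spurious $\sqrt p$ so the final bound is linear in $p$. By contrast, the Bellman decomposition, the telescoping and martingale estimates, and the geometric summation are routine once the good event of the first paragraph and the sharp trace estimate of the third are in place.
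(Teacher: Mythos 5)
Your overall architecture coincides with the paper's: the same good events ($\Theta^0\in\Omega^{(i)}$ for all $i$, plus a uniform bound on the noise), the same optimism step $J(\tTheta_t)\le J(\Theta^0)$, and the same Bellman-equation decomposition of $\sum_t c(t)-\sum_t J(\tTheta_t)$ into a telescoping/martingale part, a boundary part from the $O(\log T)$ policy switches, and a model-mismatch part (the paper's $C_1,C_2,C_3$). The episode-length arithmetic you invoke ($\dist(\Theta^0,\tTheta_t)\asymp 2^{-i}\eps$ against $\Delta\tau_i\asymp 4^i n_1$) is also exactly the mechanism the paper uses.

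The difference --- and the genuine gap --- is in how the mismatch term is controlled. You bound it \emph{in expectation} via $\E[x^\trans G_i x]=\Tr(M\Delta)\le p\,\dist(\Theta^0,\tTheta_t)\,\|M\|_2$, and then you explicitly concede that you do not know how to convert this into a high-probability bound on the realized sum $\sum_t x(t)^\trans G_i x(t)$ for the correlated, policy-dependent trajectory. That conversion is not a routine afterthought: the pathwise analogue of your row-wise trace bound costs an extra $\sqrt{p}$ (converting $\|v\|_1\le\sqrt{p}\|v\|_2$ on the realized vector $K(\Theta^0+\tTheta_t)\tL x$, with $\|x\|^2\asymp p\log(T/\delta)$ rather than $\|\Lambda\|_2=O(1)$), so the naive realized per-step bound is $p^{3/2}\dist$, not $p\,\dist$, and summing it destroys the claimed linear-in-$p$ regret. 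The paper avoids this issue entirely by never working with the signed first-order quantity: it applies Cauchy--Schwarz to write $|C_3|\le\bigl(\sum_t\|K(\tTheta_t)^{1/2}(\tTheta_t-\Theta^0)y_t\|^2\bigr)^{1/2}\bigl(\sum_t\{\|K(\tTheta_t)^{1/2}\tTheta_t y_t\|+\|K(\tTheta_t)^{1/2}\Theta^0 y_t\|\}^2\bigr)^{1/2}$, bounds the first (quadratic in $\Delta$) factor pathwise --- using the deterministic state bound $\|x(t)\|\le\frac{2}{1-\rho}\sqrt{p\log(T/\delta)}$ under $\mathcal{E}_2$ plus an explicit Gaussian tail bound on the within-episode noise contribution $D_t\mathbf{w}_i$ --- so that $\sum_t\dist(\Theta^0,\tTheta_t)^2\Delta\tau_i$ is only polylogarithmic in $T$, and pairs it with $\sum_t\|y_t\|^2=\tilde O(pT)$. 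To complete your argument you would either have to adopt this Cauchy--Schwarz route or supply a concentration inequality for $\sum_t x(t)^\trans G_i x(t)$ around its mean that is itself $\tilde O(p\sqrt{T})$; neither is present in the proposal, and the second is precisely the step you flag as open. A smaller instance of the same in-expectation versus pathwise slippage appears in your telescoping bound, where you use $\E\|x\|^2=\Tr(\Lambda)=O(p)$ while the paper uses the $\mathcal{E}_2$-conditional bound $\|x(t)\|^2\le\frac{4}{(1-\rho)^2}p\log(T/\delta)$ together with Azuma's inequality on increments bounded deterministically, not in variance.
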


\section{Analysis}
\subsection{Proof of Theorem~\ref{thm:learning}}
To prove theorem \ref{thm:learning} we first state a set of sufficient conditions for the solution of the $\ell_1$-regularized least squares to be within some distance, as defined by $\dist(\cdot,\cdot)$, of the true parameter.
Subsequently, we prove that these conditions hold with high probability.

Define $X= [x(0), x(1), \dots, x(n-1)] \in\reals^{p\times n}$ and let $W = [w(1),\dots,w(n)] \in \reals^{p \times n}$ be the matrix containing the Gaussian noise realization. 
Further let the $W_u$ denote the $u^{\rm th}$ row of $W$.

Define the normalized gradient and Hessian of the likelihood function \eqref{eq:likelihood_function} as
\begin{eqnarray}
\hG = - 
\nabla \cL (\Theta^0_u) =  \frac{1}{n} \tL X W_u^\trans\, ,\;\;\;\;\;\;\;
\hH =  \nabla^2 \cL (\Theta^0_u) =  \frac{1}{n} \tL X X^\trans \tL^\trans\, .
\end{eqnarray}

The following proposition, a proof of which can be found in \cite{zhao}, provides a set of sufficient conditions for the \textit{accuracy} of the $\ell_1$-regularized least squares solution.

\begin{propo}\label{th:cond_to_hold}
  Let $S$ be the support of $\Theta^0_u$ with $|S| < k$, and $H$ be defined per Definition~\ref{def:identifiable-reg}. Assume there exist $0<\alpha<1$ and $C_{\min}>0$ such that 
\begin{eqnarray}
\lambda_{\min}(H_{{S},{S}}) \ge C_{\min} \,,
\qquad
  \| H_{S^c, {S}} H_{{S},{S}}^{-1} \|_\infty \le
 1 - \alpha\, . \label{eq:cond_irrep}
\end{eqnarray}
For any $ 0 < \eps < \Theta_{\rm min}$ if the following conditions hold
\begin{align}
\|\hG\|_\infty &\leq \frac{\lambda \alpha}{3}\, ,\;\;\;\;\;\;\;\;\;\;\;\;\;
\|\hG_{S}\|_\infty \leq \frac{\epsilon C_{\min}}{4 k} - \lambda,\label{eq:Conditions1}\\
 \| \hH_{S^C {S}} - H_{S^C {S}} \|_\infty &\leq
\frac{\alpha}{12} \frac{C_{\rm min}}{\sqrt{k}}\, 
,\;\;\;\;\;\;\;
 \| \hH_{{S}{S}} - H_{{S}{S}} \|_\infty \leq \frac{\alpha}{12} \frac{C_{\min}}{\sqrt{k}}\, ,\label{eq:matrix_norm_cond_12}
\end{align}
the $\ell_1$-regularized least square solution \eqref{eq:theta_hat_def} satisfies $\dist(\hTh_u, \Theta^0_u) \le \eps$.
\end{propo}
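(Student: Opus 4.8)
The plan is to prove this via the \emph{primal-dual witness} (PDW) construction, the standard device for establishing exact support recovery together with an $\ell_\infty$ error bound for the LASSO. First I would recast \eqref{eq:theta_hat_def} as an ordinary linear regression: the response is $x_u(t+1)$, the design vector at time $t$ is $\tL\i x(t)\in\reals^q$, and the true coefficient is the row $\Theta^0_u$, so that $\hG$ and $\hH$ are exactly the negative gradient and the Hessian of the quadratic loss at $\Theta^0_u$. The stationarity (KKT) condition for the convex program reads $\hH(\hTh_u-\Theta^0_u)^\trans = \hG - \lambda \hat z$, with $\hat z\in\partial\|\hTh_u\|_1$. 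The witness is built by solving the program restricted to the true support $S$ (forcing the off-support block to zero), setting $\hat z_S=\mathrm{sign}(\Theta^0_{u,S})$, and then \emph{defining} $\hat z_{S^c}$ through the off-support block of the KKT system. If one can show strict dual feasibility $\|\hat z_{S^c}\|_\infty<1$, then convex-duality arguments (using invertibility of $\hH_{SS}$) certify that this witness is the unique optimum of \eqref{eq:theta_hat_def} and that its support is contained in $S$.

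Solving the on-support block gives $(\hTh_u-\Theta^0_u)_S^\trans = \hH_{SS}^{-1}(\hG_S-\lambda\hat z_S)$, and substituting into the off-support block yields the closed form
\begin{equation*}
\lambda \hat z_{S^c} = \hG_{S^c} - \hH_{S^c S}\hH_{SS}^{-1}\bigl(\hG_S-\lambda\hat z_S\bigr).
\end{equation*}
The next step is to convert the population-level hypotheses on $H$ into empirical counterparts for $\hH$ using \eqref{eq:matrix_norm_cond_12}. Because $\hH_{SS}$ is symmetric and $|S|<k$, the row-sum bound $\|\hH_{SS}-H_{SS}\|_\infty\le \tfrac{\alpha}{12}C_{\min}/\sqrt k$ controls the spectral norm, so $\lambda_{\min}(\hH_{SS})\ge\tfrac{11}{12}C_{\min}$ and hence $\|\hH_{SS}^{-1}\|_\infty\le \sqrt{|S|}\,\|\hH_{SS}^{-1}\|_2\lesssim \sqrt k/C_{\min}$.

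I would then transfer the irrepresentable condition through the perturbation identity
\begin{equation*}
\hH_{S^c S}\hH_{SS}^{-1} = M - M\,\Delta_B\,\hH_{SS}^{-1} + \Delta_A\,\hH_{SS}^{-1},\qquad M:=H_{S^c S}H_{SS}^{-1},
\end{equation*}
with $\Delta_A=\hH_{S^c S}-H_{S^c S}$ and $\Delta_B=\hH_{SS}-H_{SS}$. Using $\|M\|_\infty\le 1-\alpha$, the bounds on $\|\Delta_A\|_\infty,\|\Delta_B\|_\infty$ from \eqref{eq:matrix_norm_cond_12}, and the bound on $\|\hH_{SS}^{-1}\|_\infty$ just obtained, each correction term contributes $O(\alpha)$, giving $\|\hH_{S^c S}\hH_{SS}^{-1}\|_\infty\le 1-\tfrac{9\alpha}{11}$. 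Plugging this together with $\|\hG\|_\infty\le\lambda\alpha/3$ (from \eqref{eq:Conditions1}) into the closed form for $\hat z_{S^c}$ gives $\|\hat z_{S^c}\|_\infty\le \tfrac{\alpha}{3}+(1-\tfrac{9\alpha}{11})(1+\tfrac{\alpha}{3})<1$, which is the strict dual feasibility needed.

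Finally, for the error bound I would use the on-support expression together with the second inequality of \eqref{eq:Conditions1}, namely $\|\hG_S\|_\infty\le \eps C_{\min}/(4k)-\lambda$, to obtain $\|(\hTh_u-\Theta^0_u)_S\|_\infty\le \|\hH_{SS}^{-1}\|_\infty(\|\hG_S\|_\infty+\lambda)\lesssim \eps/\sqrt k$. Since the support of $\hTh_u-\Theta^0_u$ lies in $S$ and $|S|<k$, this yields $\dist(\hTh_u,\Theta^0_u)=\|\hTh_u-\Theta^0_u\|_2\le\sqrt{|S|}\,\|(\hTh_u-\Theta^0_u)_S\|_\infty\le\eps$. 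The same $\ell_\infty$ bound, being strictly below $\Theta_{\min}$ because $\eps<\Theta_{\min}$, forces $\mathrm{sign}(\hTh_{u,S})=\mathrm{sign}(\Theta^0_{u,S})$, which retroactively validates the sign choice made in the witness. The main obstacle is precisely this perturbation bookkeeping: one must propagate the population incoherence conditions to the empirical Hessian while juggling entrywise, $\ell_\infty$-operator, and spectral norms over submatrices of size $<k$, and track constants tightly enough that dual feasibility stays strict; the self-referential nature of PDW—the assumed sign pattern is confirmed a posteriori by the very error bound it is used to prove—is the other point requiring care.
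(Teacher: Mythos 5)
Your primal-dual witness argument is correct and is essentially the proof the paper relies on: the paper does not prove this proposition itself but defers to the cited reference, whose argument is exactly this PDW construction (restricted program, strict dual feasibility via perturbing the irrepresentable condition from $H$ to $\hH$, and the on-support error bound). Your constant bookkeeping also checks out ($\lambda_{\min}(\hH_{SS})\ge \tfrac{11}{12}C_{\min}$, $\|\hH_{S^cS}\hH_{SS}^{-1}\|_\infty\le 1-\tfrac{9\alpha}{11}$, and $\|(\hTh_u-\Theta^0_u)_S\|_2\le \tfrac{3}{11}\eps\le\eps$), so nothing further is needed.
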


In the sequel, we prove that the conditions in Proposition \ref{th:cond_to_hold} hold with high probability given that the assumptions of Theorem \ref{thm:learning} are satisfied. 
A few lemmas are in order proofs of which are deferred to the Appendix.

The first lemma states that $\hG$ concentrates in infinity norm around its mean of zero.
\begin{lemma} \label{th:gradie_prob_bound}
Assume $\rho = \|A^0-B^0L\|_2 < 1$ and let $\ell = \max(1, \max_{i \in [r]}\|L_i\|_2)$. 
Then, for any $S\subseteq [q]$ and $0 < \epsilon < \frac{\ell}{2}$
\begin{eqnarray}
\prob\big\{\|\hG_S\|_\infty > \epsilon\big\} \leq 2|S| \, \exp \left(- \frac{n(1-\rho) \epsilon^2 }{4 \ell^2}\right) \, .
\end{eqnarray}
\end{lemma}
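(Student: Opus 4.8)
The plan is to fix a single coordinate $j \in S$ and analyze the scalar
$\hG_j = \frac1n \sum_{t=0}^{n-1} y_j(t)\, w_u(t+1)$,
where $y(t) = \tL x(t) = [x(t)^\trans, u(t)^\trans]^\trans$ is the extended state and $w_u(t+1)$ is the $u$-th entry of the noise injected at time $t+1$. Since $\|\hG_S\|_\infty > \epsilon$ means $\max_{j\in S}|\hG_j| > \epsilon$, a union bound over $j\in S$ together with a two-sided Chernoff estimate will produce the prefactor $2|S|$, so the whole task reduces to showing $\prob\{\hG_j > \epsilon\} \le \exp(-n(1-\rho)\epsilon^2/(4\ell^2))$.

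First I would record the causal structure. With $\mathcal{F}_t = \sigma(w(1),\dots,w(t))$, the quantity $y_j(t)$ is $\mathcal{F}_t$-measurable while $w_u(t+1)\sim\normal(0,1)$ is independent of $\mathcal{F}_t$; hence $\{y_j(t)w_u(t+1)\}_t$ is a martingale-difference sequence with mean zero (confirming $\E[\hG]=0$) and conditionally Gaussian increments, so $\E[\exp(s\,y_j(t)w_u(t+1))\mid\mathcal{F}_t] = \exp(\tfrac{s^2}{2}y_j(t)^2)$. Consequently $N_k = \exp\bigl(s\sum_{t<k}y_j(t)w_u(t+1) - \tfrac{s^2}{2}\sum_{t<k}y_j(t)^2\bigr)$ is a nonnegative martingale with $\E[N_n]=1$, which is the exponential device I would use to pass from the martingale to a tail bound.

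Second I would control the quadratic variation $V = \sum_{t=0}^{n-1} y_j(t)^2$. Writing $x(t) = \sum_{s=1}^t D^{t-s} w(s)$ with $D = A^0 - B^0 L$, one has $y_j(t) = \sum_{s=1}^t v_j^\trans D^{t-s} w(s)$, where $v_j = e_j$ for a state coordinate and $v_j = -L_{j-p}^\trans$ for a control coordinate; in either case $\|v_j\|_2 \le \ell$ by the definition of $\ell$. Identifiability condition (1) gives $\|D\|_2 \le \rho < 1$ and hence the geometric decay $\|D^k\|_2 \le \rho^k$, so that, using the Lyapunov solution $\Lambda = \sum_{k\ge 0}D^k (D^k)^\trans$ of Eq.~\eqref{eq:lyapunov_equation}, the stationary/transient variance obeys $\E[y_j(t)^2] \le \|v_j\|_2^2\,\|\Lambda\|_2 \le \ell^2/(1-\rho^2) \le \ell^2/(1-\rho)$, whence $\E[V] \le n\ell^2/(1-\rho)$. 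Combining the supermartingale with the event $\{V \le b\}$ yields $\prob\{\hG_j > \epsilon,\ V \le b\} \le \exp(\tfrac{s^2}{2}b - sn\epsilon)$, which on optimizing $s$ and taking $b$ a constant multiple of $\E[V]$ gives the exponent $n(1-\rho)\epsilon^2/(4\ell^2)$; the hypothesis $\epsilon < \ell/2$ is precisely what keeps the optimal $s$ small enough that the sub-Gaussian regime dominates and the constant $4$ emerges.

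The main obstacle is controlling the \emph{random} variance process $V$: one must show $\sum_t y_j(t)^2$ stays within a constant factor of $n\ell^2/(1-\rho)$ except on an event of negligible probability, since $V$ is a quadratic form in the Gaussian noise and only has a sub-exponential tail. An equivalent and arguably cleaner route avoids splitting off this event by viewing $\hG_j = w^\trans M w$ directly as a Gaussian chaos whose matrix $M$ is strictly block-upper-triangular in time (so $\Tr M = 0$, matching $\E[\hG]=0$) and invoking a Hanson--Wright estimate with $\|M\|_F^2 \le n\ell^2/(1-\rho)$, a bound that again follows from $\|D^k\|_2\le\rho^k$. Either way the geometric decay furnished by the stability condition $\rho<1$ is the crux of the argument.
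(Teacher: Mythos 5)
Your second route is essentially the paper's proof: the paper writes $n\hG_j = \mathbf{w}^\trans \Phi_j \mathbf{w}$ for a symmetrized, strictly block-triangular matrix $\Phi_j$ built from $\tL_j (A^0-B^0L)^\tau$, diagonalizes it as $\sum_i \nu_i z_i^2$ with independent standard normals, and uses exactly the three facts you identify ($\sum_i\nu_i=0$ from the triangular structure, $\max_i|\nu_i|\le \ell/(1-\rho)$, and $\sum_i\nu_i^2\le n\ell^2/(2(1-\rho))$, imported from Lemma A.3 of the cited reference) in an explicit Chernoff/MGF computation with $\beta=(1-\rho)\eps/(2\ell^2)$ and $\log(1-x)>-x-x^2$. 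One caution: citing Hanson--Wright as a black box gives an exponent with an unspecified absolute constant, so it proves a statement of the right \emph{form} but not the stated constant $4$; to get that you must carry out the MGF computation with the eigenvalue bounds, which is where the hypothesis $\eps<\ell/2$ enters (it forces $|2\nu_i\beta|\le 1/2$ so the quadratic lower bound on $\log(1-x)$ applies). Also note your Frobenius bound $\|M\|_F^2\le n\ell^2/(1-\rho)$ loses the factor $2$ that the symmetrization buys (for strictly triangular $\widetilde\Phi$ one has $\Tr(\widetilde\Phi^2)=0$, hence $\|\Phi\|_F^2=\tfrac12\|\widetilde\Phi\|_F^2$), and that factor is needed for the constant to come out.

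Your first (martingale) route has a genuine gap that you yourself flag but do not close: the exponential supermartingale argument only bounds $\prob\{\hG_j>\eps,\,V\le b\}$, and you still owe a bound on $\prob\{V>b\}$ for $b$ a constant multiple of $n\ell^2/(1-\rho)$. That tail is itself the deviation of a positive-semidefinite Gaussian quadratic form, so closing it requires precisely the chaos/eigenvalue machinery of the direct route --- at which point the detour through the martingale buys nothing --- and the resulting bound is a sum of two exponentials, which degrades the prefactor $2|S|$ and the constant in the exponent unless one checks that the $V$-tail (of order $\exp(-cn(1-\rho))$) is dominated by the target $\exp(-n(1-\rho)\eps^2/(4\ell^2))$ uniformly over $\eps<\ell/2$. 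So as written the first route is incomplete; the second route is the correct one and matches the paper, provided you replace the appeal to a generic Hanson--Wright inequality by the explicit computation.
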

To prove the conditions in Eq. \eqref{eq:matrix_norm_cond_12} we first bound in the following lemma the absolute deviations of the elements of $\hH$ from their mean $H$, i.e.,  $|\hH_{ij} - H_{ij}|$.
\begin{lemma}\label{en:bddQij}
 Let $i,j\in [q]$,  $ \rho = \|A^0-B^0L\|_2 < 1$, and $0 < \epsilon < \frac{3}{1-\rho} < n$\,. Then, 
\begin{eqnarray}
\prob(|\hH_{ij} - H_{ij}|  > \epsilon) \leq   2 \exp\left(-\frac{n (1-\rho)^3 \eps^2}{24 \ell^2} \right).
\end{eqnarray} 
\end{lemma}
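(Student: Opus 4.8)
The plan is to recognize $\hH_{ij}-H_{ij}$ as a centered time average of products of a jointly Gaussian process and then apply a Bernstein-type tail bound for Gaussian quadratic forms, with all of the temporal dependence absorbed into the operator norm of the trajectory covariance.

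Set $y(t)=\tL x(t)\in\reals^q$, the extended state $[x(t)^\trans,u(t)^\trans]^\trans$, and write $D=A^0-B^0L$ so that $x(t+1)=Dx(t)+w(t+1)$ and $\|D\|_2\le\rho$. Then $\hH_{ij}=\frac1n\sum_{t=0}^{n-1}y_i(t)y_j(t)$ and, in the stationary regime, $H_{ij}=\E[y_i(t)y_j(t)]=(\tL\Lambda\tL^\trans)_{ij}$, so $\hH_{ij}-H_{ij}=\frac1n\sum_t\big(y_i(t)y_j(t)-\E[y_i(t)y_j(t)]\big)$. Since $y_i(t)=\tL_i x(t)$ is a linear functional of the Gaussian state with $\|\tL_i\|_2\le\ell$, the whole process is Gaussian. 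I would first remove the product via polarization, $y_iy_j=\tfrac14\big((y_i+y_j)^2-(y_i-y_j)^2\big)$, which reduces the claim (after a union bound over the two signs and a constant rescaling of $\eps$) to a concentration inequality for $\frac1n\sum_t(v^\trans x(t))^2$ about its mean, where $v=\tL_i\pm\tL_j$ and $\|v\|_2\le2\ell$.

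Stacking $a=(v^\trans x(0),\dots,v^\trans x(n-1))^\trans$, this average equals $\frac1n g^\trans\Sigma_a g$ with $g\sim\normal(0,I_n)$ and $\Sigma_a=\E[aa^\trans]$ the (Toeplitz) covariance of $a$, so the Hanson--Wright / Laurent--Massart inequality gives, for an absolute constant $c>0$,
\[
\prob\big(|g^\trans\Sigma_a g-\Tr\Sigma_a|>ns\big)\le 2\exp\Big(-c\min\Big(\tfrac{n^2s^2}{\|\Sigma_a\|_F^2},\tfrac{ns}{\|\Sigma_a\|_2}\Big)\Big).
\]
The hypothesis $\eps<3/(1-\rho)$ is precisely what keeps the relevant threshold in the first (sub-Gaussian) branch of the minimum, so only the term $n^2s^2/\|\Sigma_a\|_F^2$ survives. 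The remaining work is to bound the two covariance norms by stability. The autocovariance is $r(\tau)=v^\trans\Lambda(D^\trans)^{|\tau|}v$, whence $|r(\tau)|\le\|v\|_2^2\,\|\Lambda\|_2\,\rho^{|\tau|}$, and from $\Lambda=\sum_{k\ge0}D^k(D^\trans)^k$ one gets $\|\Lambda\|_2\le(1-\rho^2)^{-1}$. As $\Sigma_a$ is Toeplitz, $\|\Sigma_a\|_2\le\sum_\tau|r(\tau)|$ and $\|\Sigma_a\|_F^2\le n\sum_\tau r(\tau)^2$; both are geometric sums in $\rho$ and produce the powers of $(1-\rho)$ appearing in the exponent. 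Substituting these bounds, with $s$ a fixed multiple of $\eps$, into the display yields the stated tail.

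The main obstacle is the temporal dependence: the states $x(t)$ are strongly correlated, so a naive ``independent sample'' count badly overestimates the effective sample size. Tracking this correctly --- through the operator-norm bound on the Toeplitz covariance together with the geometric decay $\rho^{|\tau|}$ coming from $\|D\|_2\le\rho$ --- is what pins down the exact $(1-\rho)^3$ factor and the absolute constants, and is the delicate part of the argument. A secondary point is that the above uses the stationary covariance $\Lambda$; if the episode does not start in stationarity one must separately absorb the transient, which is controlled by the geometric decay of $\|D^t x(0)\|$ and contributes only lower-order terms.
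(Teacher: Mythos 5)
Your overall strategy --- writing $\hH_{ij}-H_{ij}$ as a Gaussian quadratic form and controlling its tail through the operator norm and the sum of squared eigenvalues of the underlying matrix, both bounded via the geometric decay $\rho^{\tau}$ --- is the same mechanism the paper uses. The packaging differs: the paper expresses $\hH_{ij}=\frac1n\mathbf{w}^\trans R(i,j)\mathbf{w}$ directly in the $np$-dimensional i.i.d.\ driving noise, with the symmetrization $R(i,j)=\frac12(\widetilde R_j^\trans\widetilde R_i+\widetilde R_i^\trans\widetilde R_j)$ handling the asymmetric product without polarization, and then runs an explicit Chernoff/MGF computation using the inequality $\log(1-x)>-x-x^2$ together with eigenvalue bounds imported from Lemma A.3 of \cite{bento2010learning}. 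You instead polarize to reduce to two PSD forms in the $n$-dimensional scalar process $v^\trans x(t)$ and invoke a generic Hanson--Wright bound on its Toeplitz covariance. Your route is conceptually clean, and your closing remark about the transient (the trajectory does not start in stationarity, so $\E[\hH_{ij}]\ne H_{ij}$ exactly) identifies a real wrinkle that the paper's own proof also passes over silently.

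That said, as written the proposal does not establish the lemma as stated, for three concrete reasons. First, the constants matter here: the prefactor $2$ and the $24$ in the exponent propagate into $n_0,n_1$ in the algorithm and hence into Theorem~\ref{thm:learning}; a bound with an unspecified absolute constant $c$ (and a prefactor $4$ from the union bound over the two polarization signs) proves a statement of the same shape but not this one. To pin the constants you must carry out the exponential-moment computation explicitly (or use the explicit-constant Laurent--Massart form), which is precisely the content of the paper's proof. Second, and more substantively, your Frobenius bound goes through $\|v\|_2^4\le 16\ell^4$ and $\|\Lambda\|_2\le(1-\rho^2)^{-1}$, which places $\ell^4$ rather than $\ell^2$ in the denominator of the exponent; since $\ell\ge1$, this is strictly weaker than the claimed rate whenever $\ell>1$, and I do not see how your covariance bounds recover the $\ell^2$ that the paper obtains from its cited eigenvalue lemma. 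Third, the claim that $\eps<3/(1-\rho)$ forces the sub-Gaussian branch of Hanson--Wright is not justified uniformly: the crossover threshold is of order $\|v\|_2^2/(1-\rho^2)$, which can be arbitrarily small when $\tL_i\approx\tL_j$, so you must either handle the sub-exponential branch as well or argue that it yields an even stronger bound in that regime. None of these kills the approach, but each requires genuine additional work before the displayed inequality is actually proved.
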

The following corollary of Lemma \ref{en:bddQij} bounds $ \|\hH_{JS} - {H}_{JS}  \|_\infty$  for $J, S \subseteq [q]$. 
\begin{coro}\label{en:bddQJS}
  Let $J,S \subseteq [q]$, $ \rho = \|A^0-B^0L\|_2 <1$, $\epsilon< \frac{3|S|}{1-\rho}$, and $n > \frac{3}{1-\rho}$. Then, 
\begin{eqnarray}
\prob( \| \hH_{JS} - H_{JS} \| _{\infty}  > \epsilon) \leq   2 |J| |S| \exp\left(-\frac{n (1-\rho)^3 \eps^2}{24 |S|^2 \ell^2} \right).
\end{eqnarray} \label{th:prob_bound_inft_matrix_hess}
\end{coro}
The proof of Corollary~\ref{en:bddQJS} is by applying union bound as 
\begin{equation}\label{eq:UnionBound}
\prob ( \|  \hH_{JS} - H_{JS}  \| _{\infty} > \epsilon )
\leq |J| |S| \max_{i \in J, j \in S} \prob ( | \hH_{ij} - H_{ij} | > {\epsilon}/{ |S|}).
\end{equation}

\begin{proof}[Proof of Theorem \ref{thm:learning}]
We show that the conditions given by Proposition \ref{th:cond_to_hold} hold.
The conditions in Eq. \eqref{eq:cond_irrep} are true by the assumption of identifiability of $L$ with respect to $\Theta^0$. 
In order to make the first constraint on $\hG$ imply the second constraint on $\hG$, we assume that $\lambda \alpha /3 \le \eps C_{\min} /(4k) - \lambda$, which is ensured to hold if $\lambda \le \eps C_{\min} / (6k)$. 
By Lemma \ref{th:gradie_prob_bound}, $\prob(\|\hG\|_{\infty} > {\lambda \alpha/3}) \le \delta /2$ if 
\begin{equation}
  \lambda^2 =  \frac{36 \ell^2}{n (1-\rho)\alpha^2} \log(\frac{4q}{\delta})\,.
  \label{eq:bdd_n_G}
\end{equation}
\vspace{-1mm}
Requiring $\lambda \le \eps C_{\min} / (6k)$, we obtain 
\begin{equation}
n \ge \frac{36^2\, k^2 \ell^2}{\eps^2 \alpha^2 C_{\min}^2 (1-\rho)} \log(\frac{4q}{\delta})\,.
  \label{eq:bdd_n_G}
\end{equation}
The conditions on $\hH$ can also be aggregated as 
%
 $\|\hH_{[q], {S}} - H_{[q], {S}} \|_\infty \leq \alpha C_{\min} / (12\sqrt{k})\,$.
%
By Corollary \ref{en:bddQJS}, $\prob(\|\hH_{[q]S} - H_{[q]S}\|_{\infty} > \alpha C_{\min} / (12\sqrt{k})) \le {\delta}/{2}$ if 
\begin{equation}
n \ge \frac{3456\, k^3 \ell^2}{\alpha^2(1-\rho)^3 \Cmin^2} \log(\frac{4kq}{\delta}).
\label{eq:bdd_n_H}
\end{equation}
Merging the conditions in Eq. \eqref{eq:bdd_n_G} and \eqref{eq:bdd_n_H} we conclude that the conditions in Proposition \ref{th:cond_to_hold} hold with probability at least $1-\delta$ if 
\begin{equation}
  n \ge \frac{4 \cdot 10^3\, k^2 \ell^2}{\alpha^2 (1-\rho) \Cmin^2} \left(\frac{1}{\eps^2}+\frac{k}{(1-\rho)^2} \right) \log(\frac{4kq}{\delta}).
  \label{eq:bdd_n_all}
\end{equation}
Which finishes the proof of Theorem \ref{thm:learning}.
\end{proof}

\subsection{Proof of Theorem~\ref{thm:exploitation}}
The high-level idea of the proof is similar to the proof of main Theorem in~\cite{abbasiregret}. First, we give a decomposition for the gap between the cost obtained by the algorithm and the optimal cost. We then upper bound each term of the decomposition separately. 
 
\subsubsection{Cost Decomposition}
 Writing the Bellman optimality equations~\cite{bertsekas07,bertsekas1987dynamic} for average cost dynamic programming, we get
\begin{align*}
J(\tTheta_t) + x(t)^{\trans} K(\tTheta_t) x(t) 
 = \min_{u} \bigg\{ x(t)^{\trans} Q x(t) + u^{\trans} R u +  \E \big[ z(t+1)^{\trans} K(\tTheta_t) z(t+1) \arrowvert \F_t \big] \bigg\},
\end{align*}
where $\tTheta_t = [\tA, \tB]$ is the estimate used at time $t$, $z(t+1) = \tA_t x(t) + \tB_t u + w(t+1)$, and $\F_t$ is the $\sigma$-field generated by the variables $\{(z_\tau,x_\tau)\}_{\tau=0}^t$. Notice that the left-hand side is the average cost occurred with initial state $x(t)$~\cite{bertsekas07,bertsekas1987dynamic}. Therefore,
\begin{align*}
 J(\tTheta_t) +  x(t)^{\trans} &K(\tTheta_t) x(t) = x(t)^{\trans} Q x(t) + u(t)^{\trans} R u(t) \\
&+ \E \big[ (\tA_t x(t)+ \tB_t u(t) + w(t+1))^{\trans} K(\tTheta_t) (\tA_t x(t)+ \tB_t u(t) + w(t+1)) \arrowvert \F_t \big]\\
&=  x(t)^{\trans} Q x(t) + u(t)^{\trans} R u(t)  + \E \big[ (\tA_t x(t)+ \tB_t u(t) )^{\trans} K(\tTheta_t) (\tA_t x(t)+ \tB_t u(t) ) \arrowvert \F_t \big]\\
& \quad +  \E \big[ w(t+1)^{\trans} K(\tTheta_t) w(t+1)\arrowvert \F_t]\\
& = x(t)^{\trans} Q x(t) + u(t)^{\trans} R u(t) + \E \big[ x(t+1)^{\trans} K(\tTheta_t) x(t+1) \arrowvert \F_t \big]\\
&\quad +  \Big( (\tA_t x(t)+ \tB_t u(t) )^{\trans} K(\tTheta_t) (\tA_t x(t)+ \tB_t u(t) ) \\
& \quad -  (A^0 x(t)+ B^0 u(t) )^{\trans} K(\tTheta_t) (A^0 x(t)+ B^0 u(t) ) \Big).
\end{align*}
Consequently
\begin{align}\label{eqn:cost_decomposition}
\sum_{t=0}^{T} \big(x(t)^{\trans} Qx(t) + u(t)^{\trans} R u(t) \big) &=
\sum_{t=0}^{T} J(\tTheta_t) + C_1 + C_2 + C_3,
\end{align}
where
\begin{align}
C_1 &= \sum_{t=0}^{T} \bigg(x(t)^{\trans} K(\tTheta_t)x(t) - \E \big[x(t+1)^{\trans} K(\tTheta_{t+1}) x(t+1) \big|\F_t \big] \bigg), \\
C_2 &= - \sum_{t=0}^{T} \E \big[x(t+1)^{\trans} (K(\tTheta_t) - K(\tTheta_{t+1})) x(t+1) \big|\F_t \big], \\
C_3 &= - \sum_{t=0}^{T} \Big( (\tA_t x(t)+ \tB_t u(t) )^{\trans} K(\tTheta_t) (\tA_t x(t)+ \tB_t u(t) )\nonumber \\
& \hspace{50pt} - (A^0 x(t)+ B^0 u(t) )^{\trans} K(\tTheta_t) (A^0 x(t)+ B^0 u(t) ) \Big).
\end{align}
%

\subsubsection{Good events}
We proceed by defining the following two events in the probability space under which we can bound the terms $C_1, C_2, C_3$. We then provide a lower bound on the probability of these events. 
%
\begin{align*}
\mathcal{E}_1 = \{\Theta^0 \in \Omega\i,\, \text{for } i \ge 1\}, \quad
\mathcal{E}_2 = \{\|w(t)\| \le 2 \sqrt{p\log({ T }/{\delta})},\, \text{for } 1 \le t\le T+1\}.
\end{align*}
%
\subsubsection{Technical lemmas}
The following lemmas establish upper bounds on $C_1, C_2, C_3$.
\begin{lemma}\label{lem:C1}
Under the event $\mathcal{E}_1 \cap \mathcal{E}_2$, the following holds with probability at least $1-\delta$.
\begin{eqnarray}
C_1 \le \frac{\sqrt{128}\, C}{(1-\rho)^2} \sqrt{T}\, p \log(\frac{T}{\delta}) \sqrt{\log(\frac{1}{\delta})}\,.
\end{eqnarray}
\end{lemma}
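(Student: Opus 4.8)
The plan is to control $C_1$ by recognizing it as a martingale difference sum and applying a concentration inequality, while using the good events $\mathcal{E}_1$ and $\mathcal{E}_2$ to bound the magnitude of each summand. First I would rewrite $C_1$ by telescoping. Define $m_t = x(t)^{\trans} K(\tTheta_t) x(t) - \E[x(t+1)^{\trans} K(\tTheta_{t+1}) x(t+1) \mid \F_t]$. Since $K(\tTheta_t)$ is $\F_t$-measurable (the estimate $\tTheta_t$ depends only on observations up to the start of the current episode) and $x(t)$ is $\F_t$-measurable, the term $x(t)^{\trans} K(\tTheta_t) x(t)$ is $\F_t$-measurable, so $\E[m_t \mid \F_t]$ need not vanish. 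Instead I would rearrange into the form $C_1 = \sum_{t=0}^{T}\big( y_t - \E[y_{t+1}\mid \F_t]\big)$ where $y_t = x(t)^\trans K(\tTheta_t) x(t)$, and then split off the genuine martingale-difference part $\E[y_{t+1}\mid\F_t]-y_{t+1}$ from a boundary/telescoping remainder $\sum_t (y_t - y_{t+1}) = y_0 - y_{T+1}$, which is itself controlled deterministically.

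Next I would bound the summands pointwise on $\mathcal{E}_1\cap\mathcal{E}_2$. On $\mathcal{E}_1$, the estimates $\tTheta_t$ lie in the confidence set, hence in $\cN_\eps(\Theta^0)$, so the identifiability assumption gives $\|K(\tTheta_t)\|_2 \le \sigma_K(\Theta^0,\eps)\le C$. To bound $\|x(t)\|_2$ I would use the closed-loop recursion $x(t+1)=(A^0-B^0 L(\tTheta_t))x(t)+w(t+1)$ together with the stability bound $\|A^0-B^0 L(\tTheta_t)\|_2\le \rho<1$ (again from identifiability on $\mathcal{E}_1$) and the noise bound $\|w(t)\|_2\le 2\sqrt{p\log(T/\delta)}$ from $\mathcal{E}_2$. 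Unrolling the recursion and summing the geometric series in $\rho$ yields $\|x(t)\|_2 \le \tfrac{2}{1-\rho}\sqrt{p\log(T/\delta)}$ (up to initial-state contributions that vanish for $x(0)=0$), and therefore each quadratic term $y_t$ is bounded by roughly $C\,\|x(t)\|_2^2 \lesssim \tfrac{C}{(1-\rho)^2}\, p\log(T/\delta)$.

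Then I would apply the Azuma--Hoeffding inequality to the martingale-difference part. With each difference bounded in absolute value by $B \lesssim \tfrac{C}{(1-\rho)^2}\, p\log(T/\delta)$, the Azuma bound gives that with probability at least $1-\delta$ the sum is at most $B\sqrt{2T\log(1/\delta)}$, which produces exactly the claimed scaling $\tfrac{\sqrt{128}\,C}{(1-\rho)^2}\sqrt{T}\,p\log(T/\delta)\sqrt{\log(1/\delta)}$ once the numerical constants are tracked. I would fold the deterministic boundary remainder $y_0 - y_{T+1}$ into this bound, noting it is of lower order (a single term, not scaling with $\sqrt{T}$).

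The main obstacle I expect is the measurability bookkeeping around the conditioning: one must be careful that $K(\tTheta_t)$ is $\F_t$-measurable so that the Azuma increments are genuinely a martingale difference sequence, and that the event $\mathcal{E}_1\cap\mathcal{E}_2$ (on which the uniform bounds hold) is itself compatible with the conditioning structure rather than destroying the martingale property. The clean way around this is to note that the boundedness of the increments is what Azuma requires, and on $\mathcal{E}_1\cap\mathcal{E}_2$ the increments are uniformly bounded; a standard truncation/stopping-time argument, or restricting attention to the sub-probability measure conditioned on the good events, lets one apply the concentration inequality while the "with probability at least $1-\delta$" in the statement absorbs the Azuma failure probability.
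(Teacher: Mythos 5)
Your proposal follows essentially the same route as the paper's proof: the same rearrangement of $C_1$ into a nonpositive boundary term plus a martingale difference sum, the same pointwise bound $\|x(t)\|\le \frac{2}{1-\rho}\sqrt{p\log(T/\delta)}$ on $\mathcal{E}_1\cap\mathcal{E}_2$ combined with $\|K(\tTheta_t)\|_2\le C$ to bound the increments, and Azuma's inequality with the same constants. The measurability issue you flag is handled in the paper exactly as you suggest, by inserting the $\F_{t-1}$-measurable indicators $\ind_{\{\mathcal{E}_1^{t-1}\cap\mathcal{E}_2^{t-1}\}}$ into the increments so that the sum remains a genuine martingale.
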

\begin{lemma}\label{lem:C2}
Under the event $\mathcal{E}_1 \cap \mathcal{E}_2$, the following holds.
\begin{eqnarray}
C_2 \le \frac{8C}{(1-\rho)^2}  p \log (\frac{T}{\delta}) \log T\,.
\end{eqnarray}
\end{lemma}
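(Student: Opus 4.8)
The plan is to exploit the fact that the estimate $\tTheta_t$ is held fixed throughout each episode and is updated only at episode boundaries. Consequently the increment $K(\tTheta_t) - K(\tTheta_{t+1})$ vanishes for every $t$ for which $t$ and $t+1$ lie in the same episode, so the only surviving summands in $C_2$ are those with $t+1 = \tau_i$ the start of a new episode. First I would rewrite $C_2$ as a sum over the episode boundaries, i.e. over the indices $i$ with $\tau_i \le T$, thereby collapsing the $T+1$ terms down to the number of episodes elapsed by time $T$.

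Next I would bound a single boundary term. On the event $\mathcal{E}_1$ the true parameter lies in every confidence set $\Omega\i$, which forces the chosen estimates $\tTheta\i$ to remain in the neighborhood $\cN_\epsilon(\Theta^0)$ on which the standing assumption applies; hence $\|K(\tTheta_t)\|_2 \le C$ and therefore $\|K(\tTheta_t) - K(\tTheta_{t+1})\|_2 \le 2C$. It then remains to control $\E[\|x(t+1)\|_2^2 \mid \F_t]$ on the good events. On $\mathcal{E}_2$ the noise satisfies $\|w(s)\| \le 2\sqrt{p\log(T/\delta)}$ for all $s$, and since each closed-loop map $A^0 - B^0 L\i$ has operator norm at most $\rho < 1$, unrolling the recursion $x(t+1) = (A^0-B^0 L\i)x(t) + w(t+1)$ from $x(0)=0$ and summing the resulting geometric series yields the pathwise bound $\|x(t)\|_2 \le \tfrac{2\sqrt{p\log(T/\delta)}}{1-\rho}$. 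Combining, each surviving term is at most $2C \cdot \tfrac{4 p\log(T/\delta)}{(1-\rho)^2} = \tfrac{8 C p \log(T/\delta)}{(1-\rho)^2}$.

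Finally I would count the episodes: since $\Delta\tau_i$ grows geometrically like $4^i n_1$, the number of episodes completed by time $T$ is $O(\log T)$, and multiplying the per-boundary bound by this count gives exactly $C_2 \le \tfrac{8C}{(1-\rho)^2} p \log(T/\delta) \log T$. The main thing to get right is the episode-boundary reduction together with the bookkeeping of the conditional expectation restricted to $\mathcal{E}_2$, so that the pathwise state bound may legitimately be inserted inside $\E[\cdot \mid \F_t]$; the stability estimate for $\|x(t)\|$ (noting that each per-episode map, though the controller varies, has norm $\le \rho$) and the logarithmic episode count are then routine, though I would verify that the geometric schedule produces a clean $\log T$ with the stated constant $8$ rather than a base-$4$ logarithm.
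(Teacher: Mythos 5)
Your proposal matches the paper's proof: the paper likewise observes that the summand vanishes except when the confidence set is updated at an episode boundary, bounds each surviving term by $2C\|x(\tau_i)\|^2 \le \frac{8C}{(1-\rho)^2}p\log(T/\delta)$ via the same pathwise state bound (Proposition~\ref{pro:x-bound}), and multiplies by the $\log_4 T \le \log T$ episode count. Your concern about the base of the logarithm is resolved exactly this way in the paper, so the argument is complete.
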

\begin{lemma}\label{lem:C3}
Under the event $\mathcal{E}_1 \cap \mathcal{E}_2$, the following holds with probability at least $1-\delta$.
\begin{eqnarray}
|C_3| \le 800 \Big(\frac{C}{1-\rho}\Big)^{\frac{5}{2}} k \sqrt{\Big(1+\frac{k \eps^2}{(1-\rho)^2}\Big)}  
\cdot \frac{1+C}{C_{\min}}  \cdot \log(\frac{pT}{\delta})
\sqrt{\log(\frac{4kq}{\delta})} \cdot p \log T \sqrt{T}\,. 
\end{eqnarray}
\end{lemma}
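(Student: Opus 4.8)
The plan is to control $C_3$ term by term, writing each summand as a difference of two quadratic forms in the \emph{same} matrix $K(\tTheta_t)$. Set $\tL_t = [I,-L(\tTheta_t)^\trans]^\trans$, $y(t) = \tL_t x(t)$, and $\eta_t = \tTheta_t - \Theta^0$. Since $u(t) = -L(\tTheta_t)x(t)$, we have $\tA_t x(t)+\tB_t u(t) = \tTheta_t\tL_t x(t)$ and $A^0 x(t)+B^0 u(t) = \Theta^0\tL_t x(t)$; denoting these by $a_t$ and $b_t$, the $t$-th summand of $C_3$ equals $a_t^\trans K(\tTheta_t)a_t - b_t^\trans K(\tTheta_t)b_t = (a_t+b_t)^\trans K(\tTheta_t)(a_t-b_t)$, so $|C_3|\le\sum_{t=0}^T \|K(\tTheta_t)\|_2\,\|a_t+b_t\|_2\,\|a_t-b_t\|_2$. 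I would then bound the three factors. Under the assumption $\tTheta_t\in\cN_\eps(\Theta^0)$ we have $\|K(\tTheta_t)\|_2\le\sigma_K(\Theta^0,\eps)\le C$. Because $L(\tTheta_t)$ is identifiable w.r.t. $\Theta^0$, the true closed loop obeys $\|\Theta^0\tL_t\|_2 = \|A^0-B^0L(\tTheta_t)\|_2\le\rho<1$, hence $\|b_t\|_2\le\rho\|x(t)\|_2$ and $\|a_t+b_t\|_2\le 2\rho\|x(t)\|_2 + \|a_t-b_t\|_2$. Finally $a_t-b_t = \eta_t\tL_t x(t)$, so $\|a_t-b_t\|_2 = \|\eta_t\tL_t x(t)\|_2$ is the factor that carries the estimation error.

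The central difficulty is that the naive bound $\|\eta_t\tL_t x(t)\|_2\le\sqrt{p}\,\dist(\tTheta_t,\Theta^0)\,\|\tL_t\|_2\,\|x(t)\|_2$ combined with the worst case $\|x(t)\|_2\lesssim\sqrt{p}$ from event $\mathcal{E}_2$ costs \emph{two} independent factors of $\sqrt p$, producing $p^{3/2}\sqrt T$ instead of the claimed $p\sqrt T$. To recover the correct dimension dependence I would not bound both norms pointwise, but apply Cauchy--Schwarz over time, $|C_3|\le 2C\big(\sum_t\|x(t)\|_2^2\big)^{1/2}\big(\sum_t\|\eta_t\tL_t x(t)\|_2^2\big)^{1/2}$ (absorbing the lower-order $\|a_t-b_t\|_2$ contribution to $\|a_t+b_t\|_2$), and control the second sum at the level of its expectation. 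For $x(t)$ in the stationary law $\normal(0,\Lambda_t)$ one has $\E\|\eta_t\tL_t x(t)\|_2^2 = \Tr(\eta_t\tL_t\Lambda_t\tL_t^\trans\eta_t^\trans) = \Tr(\eta_t H\eta_t^\trans) = \sum_{u\in[p]}\eta_{t,u}^\trans H\,\eta_{t,u}\le\|H\|_2\sum_{u}\|\eta_{t,u}\|_2^2\le\|H\|_2\,p\,\dist(\tTheta_t,\Theta^0)^2$, where $H=\tL_t\Lambda_t\tL_t^\trans$ and $\|H\|_2\lesssim(1+C^2)/(1-\rho)$. This carries only a \emph{single} factor $p$; the matching single factor from $\sum_t\|x(t)\|_2^2\le (T+1)\cdot 4p\log(T/\delta)/(1-\rho)^2$ (event $\mathcal{E}_2$) then combines through Cauchy--Schwarz to give exactly $p$. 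I expect the hard part to be making this replacement of $\sum_t\|\eta_t\tL_t x(t)\|_2^2$ by its trace rigorous: the states are correlated across time, the error matrix $\eta_t$ changes between episodes, and one must show concentration of this sum around $\frac{p}{1-\rho}\sum_t\dist(\tTheta_t,\Theta^0)^2$ with only polylogarithmic overhead, e.g. via a Hanson--Wright / $\chi^2$ tail bound applied within each episode (where the closed loop is fixed and stable) together with a union bound over the $O(\log T)$ episodes.

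It remains to evaluate $\sum_t\dist(\tTheta_t,\Theta^0)^2$ and insert the episode lengths. On event $\mathcal{E}_1$ both $\Theta^0$ and $\tTheta^{(i)}$ lie in $\Omega^{(i)}$, whose radius is $2^{-i}\eps$, so $\dist(\tTheta_t,\Theta^0)\le 2^{-i+1}\eps$ for $t$ in episode $i$. With $\Delta\tau_i = 4^i(1+i/\log(q/\delta))n_1$ the product $\Delta\tau_i\,4^{-i}$ is essentially constant, so each episode contributes $\lesssim\eps^2 n_1$, and since reaching horizon $T$ takes $O(\log T)$ episodes, $\sum_t\dist(\tTheta_t,\Theta^0)^2\lesssim\eps^2 n_1\log T$. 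Collecting the pieces,
\[
|C_3|\lesssim \frac{C}{(1-\rho)^{3/2}}\,p\,\sqrt{\log(T/\delta)}\,\sqrt{\eps^2 n_1\log T}\,\sqrt{T},
\]
and inserting $n_1 = \tfrac{4\cdot10^3 k^2\ell(\Theta^0,\eps)^2}{(1-\rho)\Cmin^2}\big(\tfrac1{\eps^2}+\tfrac{k}{(1-\rho)^2}\big)\log\tfrac{4kq}{\delta}$ turns $\eps\sqrt{n_1}$ into $\tfrac{k\ell(\Theta^0,\eps)}{\sqrt{1-\rho}\,\Cmin}\sqrt{1+\tfrac{k\eps^2}{(1-\rho)^2}}\sqrt{\log\tfrac{4kq}{\delta}}$. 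Using $\ell(\Theta^0,\eps)\le\max(C,1)$ and absorbing the remaining powers of $C$ and $(1-\rho)$ yields the factor $\big(\tfrac{C}{1-\rho}\big)^{5/2}k\sqrt{1+\tfrac{k\eps^2}{(1-\rho)^2}}\tfrac{1+C}{\Cmin}$, the logarithmic factors $\log(pT/\delta)\sqrt{\log(4kq/\delta)}\log T$ (the $\log(pT/\delta)$ arising from the concentration step above), and the scaling $p\sqrt T$, matching the stated bound.
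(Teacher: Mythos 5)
Your overall architecture coincides with the paper's: the Cauchy--Schwarz split $|C_3|\le C^{3/2}\big(\sum_t\|(\tTheta_t-\Theta^0)y_t\|^2\big)^{1/2}\big(\sum_t\|y_t\|^2\big)^{1/2}$, the bound $\sum_t\|y_t\|^2\lesssim (1+C^2)p\log(T/\delta)T/(1-\rho)^2$ from the event $\mathcal{E}_2$, the evaluation $\sum_t\dist(\Theta^0,\tTheta_t)^2\lesssim \eps^2 n_1\log T$ from the episode schedule, and the substitution of $n_1$ are all exactly the steps in the paper, and you correctly identify the one place where care is needed: the prediction-error sum must contribute a single factor of $p$, not $p^{3/2}$.

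Where you diverge is in how you control $\sum_t\|(\Theta^0-\tTheta_t)y_t\|^2$, and this is the one step you have not actually proved. Your expectation computation $\E\|\eta_t\tL_t x(t)\|^2=\Tr(\eta_t H\eta_t^\trans)\le\|H\|_2\,p\,\dist(\tTheta_t,\Theta^0)^2$ assumes $x(t)\sim\normal(0,\Lambda_t)$, but within an episode the state is \emph{not} stationary: episode $i$ starts from $x(\tau_i-1)$, which is an arbitrary (if bounded) vector inherited from the previous controller. The paper handles this by writing $x(t)=(A^0-B^0L\i)^{t-\tau_i+1}x(\tau_i-1)+\sum_{t_1}(A^0-B^0L\i)^{t-t_1}w(t_1)$ and bounding the initial-condition term separately (it is summable in $t$ because of the geometric decay $\rho^{2(t-\tau_i+1)}$, and $\|x(\tau_i-1)\|^2\lesssim p\log(T/\delta)/(1-\rho)^2$ under $\mathcal{E}_2$); you would need to add this. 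For the noise part, you appeal to a Hanson--Wright bound that you do not carry out; the paper's route is more elementary and avoids any two-sided concentration: it observes that each entry $D_{tr}\mathbf{w}_i$ of the prediction error is a one-dimensional Gaussian with variance at most $\dist(\Theta^0,\tTheta_t)^2/(1-\rho^2)$, applies a scalar Gaussian tail bound, and union-bounds over the $pT$ pairs $(r,t)$ -- this is where the $\log(pT/\delta)$ factor in the statement comes from, and it yields a deterministic bound $\sum_t 2\|D_t\mathbf{w}_i\|^2\le \frac{4}{1-\rho^2}p\log(pT/\delta)\sum_t\dist(\Theta^0,\tTheta_t)^2$ on a probability-$(1-\delta)$ event. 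So your plan is sound and lands on the same constants, but to make it a proof you should either (a) replace the stationary-law heuristic by the explicit initial-state/noise decomposition and the per-entry Gaussian tail argument, or (b) actually state and verify the Hanson--Wright application per episode, including the treatment of the non-stationary initial condition.
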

\begin{lemma}\label{lem:Xbound}
The following holds true.
\begin{align}
\P(\mathcal{E}_1) \ge 1- \delta,\quad 
\P(\mathcal{E}_2 ) \ge 1- \delta.
\end{align}
Therefore, $\P(\mathcal{E}_1 \cap \mathcal{E}_2) \ge 1 - 2\delta$.
\end{lemma}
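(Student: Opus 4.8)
The plan is to establish the two probability bounds separately and then combine them, since $\P(\mathcal{E}_1 \cap \mathcal{E}_2) \ge 1 - \P(\mathcal{E}_1^c) - \P(\mathcal{E}_2^c)$. The bound on $\P(\mathcal{E}_2)$ is a routine Gaussian tail estimate. For each fixed $t$, the vector $w(t)\in\reals^p$ has i.i.d.\ standard normal entries, so $\|w(t)\|^2$ is chi-squared with $p$ degrees of freedom. Using the Laurent--Massart deviation bound $\P(\|w(t)\|^2 \ge p + 2\sqrt{px} + 2x) \le e^{-x}$ with $x = \log((T+1)/\delta)$, and checking that $p + 2\sqrt{px} + 2x \le 4p\log(T/\delta)$ in the regime of interest (using $p\ge 1$ and $\log(T/\delta)$ bounded below by a constant), I would conclude $\P(\|w(t)\| > 2\sqrt{p\log(T/\delta)}) \le \delta/(T+1)$. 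A union bound over the time steps $1 \le t \le T+1$ then yields $\P(\mathcal{E}_2) \ge 1 - \delta$.

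The bound on $\P(\mathcal{E}_1)$ is the substantive part, and I would prove it by induction over episodes, driving Theorem~\ref{thm:learning}. First note that $\Theta^0 \in \Omega\i$ is equivalent to $\dist(\hTh\i, \Theta^0) \le 2^{-i}\eps$, and that $\hTh\i$ is computed purely from the length-$\Delta\tau_{i-1}$ trajectory of episode $i-1$. Define good events $G_i = \{\dist(\hTh\i,\Theta^0) \le 2^{-i}\eps\}$, so $\mathcal{E}_1 = \bigcap_{i\ge 1} G_i$. The key structural observation is that on $G_1 \cap \cdots \cap G_i$ the controller used in episode $i$ is identifiable: since $\tTheta\i \in \Omega\i$ gives $\dist(\tTheta\i, \hTh\i)\le 2^{-i}\eps$ and $G_i$ gives $\dist(\hTh\i,\Theta^0)\le 2^{-i}\eps$, the triangle inequality for the metric $\dist$ yields $\dist(\tTheta\i, \Theta^0) \le 2^{-i+1}\eps \le \eps$, so $\tTheta\i \in \cN_\eps(\Theta^0)$ and the standing assumption makes $L\i = L(\tTheta\i)$ a $(\rho, C_{\min}, \alpha)$ identifiable regulator with $\max_{j}\|L\i_j\|_2 \le \ell(\Theta^0,\eps)$. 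The base case is separate: $L^{(0)}$ is identifiable by hypothesis, so Theorem~\ref{thm:learning} applies directly to episode $0$ and $n_0$ is sized for $G_1$.

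With identifiability secured on the good history, I would apply Theorem~\ref{thm:learning} conditionally on the filtration at the start of each episode. Because the noise $\{w(t)\}$ driving episode $i$ is independent of everything measurable before $\tau_i$, conditioning on $G_1\cap\cdots\cap G_i$ (which determines $L\i$) leaves the episode a genuine identifiable-regulator trajectory, to which Theorem~\ref{thm:learning} applies with target precision $2^{-(i+1)}\eps$ and failure probability $\delta_{i+1}$. The sample bound \eqref{eq:bdd_n_all} at precision $2^{-(i+1)}\eps$ reads $n \gtrsim 4^{i+1}\eps^{-2}(\cdots)\log(4kq/\delta_{i+1})$; comparing with $\Delta\tau_i = 4^i(1 + i/\log(q/\delta))n_1$ and $n_1 \propto \eps^{-2}\log(4kq/\delta)$ shows the extra factor $(1 + i/\log(q/\delta))$ buys precisely an additive $i$ inside the logarithm, so I can take $\delta_i = \delta\,2^{-i}$ (up to a constant) and have $\Delta\tau_i$ meet the requirement for $G_{i+1}$. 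Chaining the conditional estimates gives $\P\big(\bigcap_{i\ge 1} G_i\big) = \prod_{i\ge 1}\P(G_i \mid G_1,\dots,G_{i-1}) \ge \prod_{i\ge 1}(1-\delta_i) \ge 1 - \sum_{i\ge 1}\delta_i \ge 1-\delta$, establishing $\P(\mathcal{E}_1)\ge 1-\delta$. Combining with the $\mathcal{E}_2$ bound by the union bound yields $\P(\mathcal{E}_1\cap\mathcal{E}_2)\ge 1-2\delta$.

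The main obstacle I anticipate is the conditional application of Theorem~\ref{thm:learning} to each episode. That theorem is phrased for a fixed identifiable regulator and a trajectory whose statistics match the stationary covariance $\Lambda$, whereas in the algorithm $L\i$ is data-dependent and the initial state $x(\tau_i)$ is inherited from the previous episode rather than drawn from stationarity. Justifying the step requires (i) a measurability argument so that on the good history $L\i$ acts as a legitimate deterministic identifiable regulator for the conditional bound, and (ii) ensuring the concentration statements underlying Theorem~\ref{thm:learning} (Lemmas~\ref{th:gradie_prob_bound} and~\ref{en:bddQij}) tolerate a non-stationary initial state --- which the geometric stability $\rho<1$ and the $(1-\rho)$ factors in those bounds are designed to absorb. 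The remaining bookkeeping, namely verifying the constants and that $\sum_i \delta_i \le \delta$, is routine.
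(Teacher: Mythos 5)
Your proposal is correct and takes essentially the same route as the paper's proof: for $\mathcal{E}_1$ it invokes Theorem~\ref{thm:learning} at precision $2^{-i}\eps$ with failure probability $\delta/2^i$, matched to the episode lengths $\Delta\tau_i = 4^i(1+i/\log(q/\delta))n_1$, and for $\mathcal{E}_2$ it uses a chi-squared tail bound (your Laurent--Massart citation is interchangeable with the paper's direct Chernoff/moment-generating-function computation) followed by a union bound over $1\le t\le T+1$. Your inductive chaining through the good events $G_i$ --- making explicit that the triangle inequality places $\tTheta\i$ in $\cN_\eps(\Theta^0)$ so that $L\i$ is identifiable before Theorem~\ref{thm:learning} can be invoked conditionally --- is a more careful rendition of the step the paper compresses into a one-line union bound, not a different argument.
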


We are now in position to prove Theorem~\ref{thm:exploitation}. 
\begin{proof}[Proof (Theorem~\ref{thm:exploitation})]
Using cost decomposition (Eq.~\eqref{eqn:cost_decomposition}), under the event $\mathcal{E}_1 \cap \mathcal{E}_2$, we have
\begin{align*}
\sum_{t=0}^{T} (x(t)^{\trans} Qx(t) + u(t)^{\trans} R u(t)) &=  \sum_{t=0}^{T} J(\tTheta_t) + C_1 + C_2 +C_3\\
&\le TJ(\Theta^0) + C_1 + C_2 +C_3, 
\end{align*}
where the last inequality stems from the choice of $\tTheta_t$ by the algorithm (cf. Eq~\eqref{eqn:choiceoftheta}) and the fact that $\Theta^0 \in \Omega_t$, for all $t$ under the event $\mathcal{E}_1$. Hence, $R(T) \le C_1+C_2+C_3$\,.
Now using the bounds on $C_1,C_2,C_3$, we get the desired result. 
\end{proof}

\subsubsection*{Acknowledgments}


The authors thank the anonymous reviewers for their insightful comments.
A.J. is supported by a Caroline and Fabian Pease Stanford
Graduate Fellowship.


\newpage
\bibliographystyle{abbrv}
\bibliography{LQR_inference}

\begin{thebibliography}{10}

\bibitem{abbasiregret}
Y.~Abbasi-Yadkori and C.~Szepesv{\'a}ri.
\newblock Regret bounds for the adaptive control of linear quadratic systems.
\newblock {\em Proceeding of the 24th Annual Conference on Learning Theory},
  pages 1--26, 2011.

\bibitem{bar1974dual}
Y.~Bar-Shalom and E.~Tse.
\newblock Dual effect, certainty equivalence, and separation in stochastic
  control.
\newblock {\em Automatic Control, IEEE Transactions on}, 19(5):494--500, 1974.

\bibitem{bento2010learning}
J.~Bento, M.~Ibrahimi, and A.~Montanari.
\newblock Learning networks of stochastic differential equations.
\newblock {\em Advances in Neural Information Processing Systems 23}, pages
  172--180, 2010.

\bibitem{bertsekas1987dynamic}
D.~Bertsekas.
\newblock {\em {Dynamic Programming: Deterministic and Stochastic Models}}.
\newblock Prentice-Hall, 1987.

\bibitem{bertsekas07}
D.~P. Bertsekas.
\newblock {\em {Dynamic Programming and Optimal Control}}.
\newblock Athena Scientific, 3rd edition, 2007.

\bibitem{bittanti2006adaptive}
S.~Bittanti and M.~Campi.
\newblock Adaptive control of linear time invariant systems: the “bet on the
  best” principle.
\newblock {\em Communications in Information and Systems}, 6(4):299--320, 2006.

\bibitem{bradlow2005spatial}
E.~Bradlow, B.~Bronnenberg, G.~Russell, N.~Arora, D.~Bell, S.~Duvvuri,
  F.~Hofstede, C.~Sismeiro, R.~Thomadsen, and S.~Yang.
\newblock Spatial models in marketing.
\newblock {\em Marketing Letters}, 16(3):267--278, 2005.

\bibitem{campi1997achieving}
M.~Campi.
\newblock Achieving optimality in adaptive control: the “bet on the best”
  approach.
\newblock In {\em Decision and Control, 1997., Proceedings of the 36th IEEE
  Conference on}, volume~5, pages 4671--4676. IEEE, 1997.

\bibitem{dani2008stochastic}
V.~Dani, T.~Hayes, and S.~Kakade.
\newblock Stochastic linear optimization under bandit feedback.
\newblock In {\em Proceedings of the 21st Annual Conference on Learning Theory
  (COLT)}, 2008.

\bibitem{feichtinger1994dynamic}
G.~Feichtinger, R.~Hartl, and S.~Sethi.
\newblock Dynamic optimal control models in advertising: recent developments.
\newblock {\em Management Science}, pages 195--226, 1994.

\bibitem{guo1991aastrom}
L.~Guo and H.~Chen.
\newblock The {\aa}strom-wittenmark self-tuning regulator revisited and
  els-based adaptive trackers.
\newblock {\em Automatic Control, IEEE Transactions on}, 36(7):802--812, 1991.

\bibitem{kumar1982new}
P.~Kumar and A.~Becker.
\newblock A new family of optimal adaptive controllers for markov chains.
\newblock {\em Automatic Control, IEEE Transactions on}, 27(1):137--146, 1982.

\bibitem{lai1985asymptotically}
T.~Lai and H.~Robbins.
\newblock Asymptotically efficient adaptive allocation rules.
\newblock {\em Advances in applied mathematics}, 6(1):4--22, 1985.

\bibitem{lai1982least}
T.~Lai and C.~Wei.
\newblock Least squares estimates in stochastic regression models with
  applications to identification and control of dynamic systems.
\newblock {\em The Annals of Statistics}, 10(1):154--166, 1982.

\bibitem{marinelli2008optimal}
C.~Marinelli and S.~Savin.
\newblock Optimal distributed dynamic advertising.
\newblock {\em Journal of Optimization Theory and Applications},
  137(3):569--591, 2008.

\bibitem{seidman1987dynamics}
T.~Seidman, S.~Sethi, and N.~Derzko.
\newblock Dynamics and optimization of a distributed sales-advertising model.
\newblock {\em Journal of Optimization Theory and Applications},
  52(3):443--462, 1987.

\bibitem{sethi1977dynamic}
S.~Sethi.
\newblock Dynamic optimal control models in advertising: a survey.
\newblock {\em SIAM review}, pages 685--725, 1977.

\bibitem{tropp2006just}
J.~Tropp.
\newblock Just relax: Convex programming methods for identifying sparse signals
  in noise.
\newblock {\em Information Theory, IEEE Transactions on}, 52(3):1030--1051,
  2006.

\bibitem{wainwright2009sharp}
M.~Wainwright.
\newblock Sharp thresholds for high-dimensional and noisy sparsity recovery
  using-constrained quadratic programming (lasso).
\newblock {\em Information Theory, IEEE Transactions on}, 55(5):2183--2202,
  2009.

\bibitem{zhao}
P.~Zhao and B.~Yu.
\newblock {On model selection consistency of Lasso}.
\newblock {\em The Journal of Machine Learning Research}, 7:2541--2563, 2006.

\end{thebibliography}

\newpage
\appendix

%

\newpage
\appendix
\section{Proof of technical lemmas}
\subsection{Proof of Lemma~\ref{th:gradie_prob_bound}}
As before let $\rho \equiv \|A^0 - B^0L\|_2 $, and $\ell = \max(1, \max_{i \in [r]}\|L_i\|_2)$.
Further, for $u \in [p]$, $j \in [q]$, define $\phi(\tau) \in \reals^{p \times p}$ to have all rows equal to zero except the $u^{th}$ row which is equal to $\tL_j{(A^0 - B^0 L)}^\tau$ . 
Define $\widetilde{\Phi}_j \in \reals^{np\times np}$ as,
\begin{equation}
\widetilde{\Phi}_j =  \left(
\begin{array}{ccccc}
0 & 0 &\ldots & 0&0\\
\phi(0)&0&\ldots&0&0\\
\phi(1)&\phi(0)&\ldots&0&0\\
\vdots&\vdots&\ddots&0&0\\
\phi(n-2)&\phi(n-3)&\ldots&\phi(0)&0\\
\end{array}
\right), 
\end{equation}
and let 
\begin{equation}
 \Phi_j = \frac{1}{2} (\widetilde{\Phi}_j + \widetilde{\Phi}_j^\trans).
  \label{eq:def_Phi}
\end{equation}
\begin{lemma}
 Let $\nu_i$ denote the $i^{th}$ eigenvalue of $\Phi_j$ and assume $\rho < 1$. 
Then,
\begin{align}
\sum^{np}_{i=1}\nu_i &= 0,\label{eq:gradient_sum_eigen_bound}\\
\max_i |\nu_i|  &\leq \frac{\ell}{1-\rho}, \label{eq:gradient_eigen_bound}\\
\sum^{np}_{i=1}\nu^2_i &\leq \frac{\ell^2 n}{2 (1-\rho)}.\label{eq:gradient_sum_eigen2_bound}
\end{align} \label{th:matrix_proper_grad}
\end{lemma}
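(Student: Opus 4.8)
The plan is to reduce the three identities to elementary statements about $\widetilde{\Phi}_j$ and its symmetric part $\Phi_j = \tfrac12(\widetilde{\Phi}_j + \widetilde{\Phi}_j^\trans)$: the sum of eigenvalues is the trace, the maximum modulus is the operator norm (since $\Phi_j$ is symmetric), and the sum of squares is the squared Frobenius norm. The one observation that drives everything is that each block $\phi(\tau)$ has a single nonzero row, namely $\tL_j (A^0-B^0L)^\tau$, hence is rank one and its operator and Frobenius norms coincide:
\[
\|\phi(\tau)\|_2 = \|\phi(\tau)\|_F = \|\tL_j (A^0-B^0L)^\tau\|_2 \le \|\tL_j\|_2\,\rho^\tau \le \ell\,\rho^\tau .
\]
Here I use submultiplicativity with $\|A^0-B^0L\|_2 \le \rho$, and the fact that every row of $\tL = [I,-L^\trans]^\trans$ is either a standard basis row (norm $1$) or a row of $-L$ (norm $\|L_i\|_2$), so $\|\tL_j\|_2 \le \ell$.

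For \eqref{eq:gradient_sum_eigen_bound} I would observe that $\widetilde{\Phi}_j$ is strictly block lower triangular, so its diagonal blocks vanish and $\Tr\widetilde{\Phi}_j = 0$; symmetrization preserves the trace, so $\sum_i \nu_i = \Tr \Phi_j = 0$. For \eqref{eq:gradient_eigen_bound}, symmetry gives $\max_i|\nu_i| = \|\Phi_j\|_2 \le \|\widetilde{\Phi}_j\|_2$ by the triangle inequality on the symmetric part. I then decompose $\widetilde{\Phi}_j = \sum_{\tau=0}^{n-2} D_\tau$, where $D_\tau = N^{\tau+1}\otimes \phi(\tau)$ places $\phi(\tau)$ on the $(\tau+1)$-th block subdiagonal, $N$ being the $n\times n$ nilpotent down-shift. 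Since $\|N^{\tau+1}\|_2 = 1$ for $\tau+1 \le n-1$, the Kronecker norm identity gives $\|D_\tau\|_2 = \|\phi(\tau)\|_2$, and the triangle inequality with the geometric bound above yields $\|\widetilde{\Phi}_j\|_2 \le \sum_{\tau\ge 0}\ell\rho^\tau = \ell/(1-\rho)$.

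For \eqref{eq:gradient_sum_eigen2_bound} I use $\sum_i \nu_i^2 = \Tr(\Phi_j^2) = \|\Phi_j\|_F^2$ and expand $\|\widetilde{\Phi}_j + \widetilde{\Phi}_j^\trans\|_F^2 = 2\|\widetilde{\Phi}_j\|_F^2 + 2\Tr(\widetilde{\Phi}_j^2)$. Because the square of a strictly block lower triangular matrix has vanishing diagonal blocks, $\Tr(\widetilde{\Phi}_j^2)=0$, so $\|\Phi_j\|_F^2 = \tfrac12\|\widetilde{\Phi}_j\|_F^2$. Counting that $\phi(\tau)$ occupies exactly $n-\tau-1$ block positions gives $\|\widetilde{\Phi}_j\|_F^2 = \sum_{\tau=0}^{n-2}(n-\tau-1)\|\phi(\tau)\|_F^2 \le n\ell^2\sum_{\tau\ge 0}\rho^{2\tau} = n\ell^2/(1-\rho^2)$, whence $\sum_i\nu_i^2 \le \tfrac{n\ell^2}{2(1-\rho^2)} \le \tfrac{n\ell^2}{2(1-\rho)}$ using $1+\rho\ge 1$.

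The argument is almost entirely routine; the only points requiring care are the two bookkeeping facts that make the norms collapse, namely the identity $\|D_\tau\|_2 = \|\phi(\tau)\|_2$ (a subdiagonal placement is $\phi(\tau)$ composed with a partial-isometry block shift and so leaves singular values unchanged) and the counting of the $n-\tau-1$ occurrences of each $\phi(\tau)$ together with the cancellation $\Tr(\widetilde{\Phi}_j^2)=0$. Once these are in place, the geometric summation over $\tau$ delivers all three bounds directly.
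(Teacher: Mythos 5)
Your proof is correct. Note that the paper itself gives no argument for this lemma---it defers entirely to Lemma A.3 of the cited reference \cite{bento2010learning}---so you have supplied a complete, self-contained proof where the paper has none. All three reductions are sound: the trace of the strictly block lower triangular $\widetilde{\Phi}_j$ vanishes and is preserved by symmetrization, giving \eqref{eq:gradient_sum_eigen_bound}; the decomposition $\widetilde{\Phi}_j = \sum_{\tau=0}^{n-2} N^{\tau+1}\otimes\phi(\tau)$ with $\|N^{\tau+1}\otimes\phi(\tau)\|_2 = \|\phi(\tau)\|_2 \le \ell\rho^{\tau}$ (using that $\phi(\tau)$ has a single nonzero row, so its operator and Frobenius norms both equal $\|\tL_j(A^0-B^0L)^{\tau}\|_2$) gives \eqref{eq:gradient_eigen_bound} after summing the geometric series; and the identity $\|\Phi_j\|_F^2 = \tfrac12\|\widetilde{\Phi}_j\|_F^2 + \tfrac12\Tr(\widetilde{\Phi}_j^2)$ together with $\Tr(\widetilde{\Phi}_j^2)=0$ and the count of $n-\tau-1$ occurrences of each $\phi(\tau)$ gives \eqref{eq:gradient_sum_eigen2_bound}, with the final relaxation $1-\rho^2 \ge 1-\rho$ matching the stated constant exactly. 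The two bookkeeping facts you flag as the only delicate points are indeed the crux, and both are handled correctly; the argument is in the same spirit as the source the authors cite (norm and trace estimates for a block-Toeplitz nilpotent structure), so nothing further is needed.
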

We do not prove this lemma here and refer the reader to Lemma A.3 in \cite{bento2010learning}.

\begin{proof}[Proof (Lemma \ref{th:gradie_prob_bound})]
The proof of this lemma follows closely the proof of Proposition 4.2 in \cite{bento2010learning} which we provide here for the reader's convenience.  
Let $\mathbf{w} \in \reals^{np}$ be the vector obtained by stacking all the noise vectors up to time $n$, i.e., 
\begin{equation*}
\mathbf{w}^\trans = [w(1)^\trans, w(2)^\trans, \dotsc, w(n)^\trans].
\end{equation*}
Then we have that
\begin{align*}
  \hG_j &=  \tL_j \sum_{t=1} ^{n-1} x(t) w_u(t+1) = \sum_{t=1}^{n-1} w_u(t+1) \sum_{\tau=0}^{t-1} \tL_j (A^0 - B^0L)^\tau w(t-\tau)
   = \mathbf{w}^\trans \Phi_j \mathbf{w}.
\end{align*}
where $\Phi_j$ is defined in \eqref{eq:def_Phi}.
Since $\mathbf{w} \sim \normal(0,I_{np})$ and $\Phi_j$ is symmetric, we can write 
\begin{equation}\label{eq:hG_as_sum_zi}
\hG_j = \sum^{np}_{i=1}\nu_i z^2_i.
\end{equation}
where $z_i \sim \normal(0,1)$ are independent and $\nu_i$'s are the eigenvalues of the matrix $\Phi_j$.

Now we have for any $\beta>0$,
\begin{align*}
\prob\Big\{\sum^{np}_{i=1}\nu_i z^2_i > n \epsilon \Big\} &\leq  \exp \left(-n\beta\eps \right)\, \prod_{i=1}^{pn} \E\big\{\exp\left(\beta\nu_i z_i^2 \right) \big\}\\
& =\exp\left(-n \Big(\beta \epsilon + 
\frac{1}{2n} \sum^{np}_{i=1} \log(1-2 \nu_i \beta)  \Big)\right)\, .
\end{align*}
Let $\beta = {(1-\rho) \epsilon }/{(2 \ell^2)}$. 
Then it follows from Eq.~\eqref{eq:gradient_eigen_bound} and the assumption $\eps < \frac{\ell}{2}$  that $|2 \nu_i \beta| \leq  1/2$. 
Furthermore, for $|x|<1/2$, $\log (1-x) > -x-x^2$. 
Hence,
\begin{align*}
\prob(\sum^{np}_{i=1}\nu_i z^2_i > n \epsilon) &\leq  \exp \left(-n (\beta \epsilon - 2 \beta^2 \frac{1}{n} \sum^{np}_{i=1} \nu^2_i)\right)\\
&\leq \exp \left(- \frac{n(1-\rho)\epsilon^2 }{4 \ell^2 }\right) \, ,
\end{align*}
where the first inequality follows from the fact that $\sum^{np}_{i=1}\nu_i = 0$ (Eq.~\eqref{eq:gradient_sum_eigen_bound}) and the second inequality is obtained using the bound in Eq.~\eqref{eq:gradient_sum_eigen2_bound}.
Finally, by the union bound we obtain the desired result 
\begin{align*}
\prob\big\{\|\hG_S\|_\infty > \epsilon\big\} 
&\leq  2|S| \,\max_{j\in S} \prob\big\{z^\trans \Phi_j z > n \epsilon\big\}  \\
& \leq  2|S| \, \exp \left(- \frac{n(1-\rho) \epsilon^2 }{4 \ell^2}\right) \, .
\end{align*}
\end{proof}
%
\subsection{Proof of Lemma~\ref{en:bddQij}}
\begin{lemma}
  Let $\widetilde{R}_j \in \reals^{(n-1)p \times np}$ be obtained by removing the first $p$ rows of $\widetilde{\Phi}_j$. For $i,j \in [q]$ define $R(i,j) = 1/2 (\widetilde{R}_j^\trans \widetilde{R}_i^{} + \widetilde{R}_i^\trans \widetilde{R}_j^{}) \in \reals^{np \times np}$. 
Assume $\rho < 1$ and let $\nu_l$ denote the $l^{th}$ eigenvalue of $R(i,j)$.
Then,
\begin{align}
|\nu_l| &\leq \frac{\ell^2}{(1-\rho)^2},\\
\frac{1}{n}\sum^{np}_{l = 1} \nu^2_l &\leq \frac{2\ell^2}{(1-\rho)^3}\left(1 + \frac{3}{2n} \frac{1}{1-\rho} \right).
\end{align} \label{th:matrix_proper_hess}
\end{lemma}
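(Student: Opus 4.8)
The plan is to exploit, exactly as in the proof of Lemma~\ref{th:gradie_prob_bound}, the sparse structure of $\widetilde{R}_j$: within each $p\times p$ block only the $u$-th row is nonzero. Writing $r_t^{(j)}\in\reals^{np}$ for that nonzero row sitting in the $t$-th block row ($t=1,\dots,n-1$), one has $r_t^{(j)}\mathbf{w}=\tL_j\,x(t)$, so the $s$-block of $r_t^{(j)}$ equals $\tL_j (A^0-B^0L)^{t-s}$ for $s\le t$ and $0$ otherwise. Consequently $\widetilde{R}_i^\trans \widetilde{R}_j=\sum_{t=1}^{n-1}(r_t^{(i)})^\trans r_t^{(j)}$, and the whole argument reduces to controlling the scalar inner products $\langle r_t^{(a)},r_{t'}^{(a)}\rangle$. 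First I would record the single geometric-decay estimate that drives both bounds: using $\|\tL_a\|_2\le \ell$ together with $\|(A^0-B^0L)^\tau\|_2\le \rho^\tau$, and summing the resulting geometric series over the overlapping indices $s\le\min(t,t')$, one gets $|\langle r_t^{(a)},r_{t'}^{(a)}\rangle|\le \frac{\ell^2}{1-\rho^2}\,\rho^{|t-t'|}$.

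For the first bound, since $R(i,j)$ is symmetric its spectral radius equals $\|R(i,j)\|_2$, and by the triangle inequality and submultiplicativity $\|R(i,j)\|_2\le \|\widetilde{R}_i\|_2\,\|\widetilde{R}_j\|_2$. I would bound each factor by noting that $\widetilde{R}_j\widetilde{R}_j^\trans$, after deleting its structural zeros, is the $(n-1)\times(n-1)$ Gram matrix $G$ with $G_{t,t'}=\langle r_t^{(j)},r_{t'}^{(j)}\rangle$; as $G$ is symmetric, $\|\widetilde{R}_j\|_2^2=\|G\|_2\le \max_t\sum_{t'}|\langle r_t^{(j)},r_{t'}^{(j)}\rangle|\le \frac{\ell^2}{1-\rho^2}\cdot\frac{1+\rho}{1-\rho}=\frac{\ell^2}{(1-\rho)^2}$, using the decay estimate and a geometric sum. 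Combining the two factors gives $\max_l|\nu_l|\le \frac{\ell^2}{(1-\rho)^2}$, which is the first claim.

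For the second bound I would compute $\sum_l \nu_l^2=\Tr(R(i,j)^2)$ directly. Setting $P=\widetilde{R}_i^\trans\widetilde{R}_j$, symmetrization gives $\Tr(R(i,j)^2)=\tfrac12\big(\Tr(P^2)+\Tr(PP^\trans)\big)$, and substituting $P=\sum_t (r_t^{(i)})^\trans r_t^{(j)}$ turns each trace into a double sum over $t,t'$ of products of two of the $r$-row inner products, each controlled as $O(\rho^{|t-t'|})$ by the decay estimate. Everything then reduces to the scalar sum $\sum_{t,t'}\rho^{2|t-t'|}\le (n-1)\,\frac{1+\rho^2}{1-\rho^2}$; dividing by $n$ and simplifying the geometric constants yields a bound of exactly the stated form, the leading term arising from summing the full geometric tail over $t'$ for each of the $n$ values of $t$, and the additive $\tfrac{1}{n}$ correction arising from the finite-horizon truncation of that tail.

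The routine but delicate part, and the main obstacle, is the bookkeeping in this last step: tracking the constants and the precise form of the $\tfrac{1}{n}$ correction term when passing from the double sum to the claimed closed form, while being careful that the overlaps run only over $s\le\min(t,t')$. Everything else is a direct transcription of the structure already used for $\Phi_j$ in Lemma~\ref{th:matrix_proper_grad}; indeed this lemma is the Hessian analogue of Lemma~A.3 of \cite{bento2010learning}, and I would mirror that argument.
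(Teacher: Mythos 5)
The paper never actually proves this lemma: it is stated inside the subsection devoted to Lemma~\ref{en:bddQij} and, like its gradient analogue Lemma~\ref{th:matrix_proper_grad}, is implicitly imported from Lemma~A.3 of \cite{bento2010learning}, so your attempt has to be judged on its own merits. Your reduction is the right one, and the first claim goes through exactly as you describe: with $|\langle r_t^{(a)},r_{t'}^{(b)}\rangle|\le \frac{\ell^2}{1-\rho^2}\rho^{|t-t'|}$ (you should state this for mixed superscripts $a\neq b$, since $\Tr(P^2)$ needs the cross inner products, but the identical computation gives it), the symmetric row-sum bound yields $\|\widetilde R_j\|_2^2\le \frac{\ell^2}{1-\rho^2}\cdot\frac{1+\rho}{1-\rho}=\frac{\ell^2}{(1-\rho)^2}$ and hence $\max_l|\nu_l|\le\|\widetilde R_i\|_2\|\widetilde R_j\|_2\le \frac{\ell^2}{(1-\rho)^2}$.

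The gap is in the last step of the second bound, precisely the ``bookkeeping'' you deferred. Every term of the double sums $\Tr(P^2)=\sum_{t,t'}\langle r_t^{(j)},r_{t'}^{(i)}\rangle\langle r_{t'}^{(j)},r_{t}^{(i)}\rangle$ and $\Tr(PP^\trans)=\sum_{t,t'}\langle r_t^{(j)},r_{t'}^{(j)}\rangle\langle r_t^{(i)},r_{t'}^{(i)}\rangle$ is a product of \emph{two} inner products, each carrying a factor $\ell^2/(1-\rho^2)$, so your route delivers $\frac{1}{n}\sum_l\nu_l^2\le \frac{\ell^4(1+\rho^2)}{(1-\rho^2)^3}$: the prefactor is $\ell^4$, not the stated $\ell^2$. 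This cannot be repaired by more careful constant-tracking, because the inequality as printed is false once $\ell$ is moderately large. Take $p=r=1$, $A^0=a$, $B^0=1$, $L=a$ with $a=2$, so that $\rho=\|A^0-B^0L\|_2=0$ and $\ell=2$; then $r_t^{(2)}=-a\,e_t^\trans$, $R(2,2)=a^2\sum_{t=1}^{n-1}e_te_t^\trans$, and $\frac{1}{n}\sum_l\nu_l^2=\frac{n-1}{n}a^4\to 16$, while the claimed right-hand side is $2a^2(1+\frac{3}{2n})\to 8$. So you should either prove, and propagate downstream, the $\ell^4$ version (which your argument does establish, and which weakens the exponent in Lemma~\ref{en:bddQij} to $n(1-\rho)^3\eps^2/(24\ell^4)$ and the sample complexity of Theorem~\ref{thm:learning} accordingly), or note that the statement is only correct as written in the $\ell=1$ setting of \cite{bento2010learning}. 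A small further point: the additive $\frac{3}{2n(1-\rho)}$ term does not come out of your truncation argument, since truncating the geometric tail only decreases the sum, but as you only need an upper bound this is harmless.
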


\begin{proof}[Proof (Lemma~\ref{en:bddQij})]
  Our proof of \ref{th:gradie_prob_bound} here closely follows the proof of Proposition $4.2$ in \cite{bento2010learning}. 

Note that $\hH_{ij}$ can be written as,
\begin{align*}
\hH_{ij}  &=   \frac{1}{n} \sum_{t=1}^{n-1} \tL_i\ x(t) x(t)^\trans \tL_j^\trans \\
  &=   \frac{1}{n} \sum_{t=1}^{n-1} \tL_i  \Big( \sum_{\tau = 0}^{t-1} (A^0 -B^0 L)^\tau w(t-\tau)\Big)
  \Big( \sum_{\tau = 0}^{t-1} (A^0 -B^0 L)^\tau w(t-\tau)\Big)^\trans \tL_j^\trans\\
  &=   \frac{1}{n} \sum_{t=1}^{n-1}\Big( \sum_{\tau = 0}^{t-1} \tL_i  (A^0 -B^0 L)^\tau\Big)w(t-\tau)
  w(t-\tau)^\trans \Big( \sum_{\tau = 0}^{t-1} \tL_j  (A^0 -B^0 L)^\tau\Big)^\trans \\
  &=   \frac{1}{n} \sum_{t=1}^{n-1} w(t-\tau)^\trans  \Big( \sum_{\tau = 0}^{t-1} \tL_i (A^0 -B^0 L)^\tau \Big)^\trans
  \Big( \sum_{\tau = 0}^{t-1} \tL_j (A^0 -B^0 L)^\tau \Big) w(t-\tau)\\
  & =  \frac{1}{n} \mathbf{w}^\trans R(i,j) \mathbf{w}.
\end{align*}

Since $\mathbf{w} \sim \normal(0,I_{np})$ and $R(i,j)$ is symmetric, we can write 
\begin{eqnarray}
\hH_{ij}  = \frac{1}{n} \sum^{np}_{l = 1} \nu_l z_l^2\,,
\end{eqnarray}
where $z_l \sim \normal(0,1)$ are independent and $\nu_l$'s are the eigenvalues of the matrix $R(i,j)$.
Further, 
\begin{equation}
\hH_{ij} - \E (\hH_{ij}) = \frac{1}{n} \sum^{np}_{l = 1} \nu_l (z^2_l - 1), 
\end{equation}
Hence, using Chernoff bound we get
\begin{align*}
\prob ( \hH_{ij} - \E (\hH_{ij}) > \epsilon )  &= \prob( \sum^{np}_{l = 1} \nu_l (z^2_l - 1) > \epsilon n ) \\
&\leq \exp\left(-\beta \epsilon n\right) \exp\left( -\frac{1}{2} \sum^{np}_{l=1} \log (1-2 \beta \nu_l) \right).
\end{align*}
By Lemma \ref{th:matrix_proper_hess}, for $ n > \frac{3}{1-\rho}$ we have
\begin{align}
\frac{1}{n}\sum^{np}_{l = 1} \nu^2_l &\leq \frac{3\ell^2}{(1-\rho)^3},
\end{align}
Let $\beta = \frac{(1-\rho)^3 \eps}{12 \ell^2}$. 
By assumption $\eps < \frac{3}{1-\rho}$, we have $|2 \beta \nu_l| < {1}/{2}$.
Using the inequality $\log(1-x)> -x-x^2$ for $|x|<1/2$, we obtain
\begin{align*}
\prob ( \hH_{ij} - \E (\hH_{ij}) > \epsilon )  & \leq \exp\left(-\beta \epsilon n+  2 \beta^2 \sum^{np}_{l=1} \nu^2_l \right)\\
&\leq \exp\left(-\frac{n (1-\rho)^3 \eps^2}{24 \ell^2} \right),
\end{align*}
which finishes the proof.
\end{proof}

\subsection{Proof of Lemma~\ref{lem:C1}}
Before embarking on the proof, we state and prove the following claim which will be repeatedly used in the proofs of Lemmas~\ref{lem:C1},~\ref{lem:C2}, and~\ref{lem:C3}.
\begin{propo}\label{pro:x-bound}
Under the event $\mathcal{E}_1 \cap \mathcal{E}_2$, the following holds true.
\[
\|x(t)\| \le \frac{2}{1-\rho} \sqrt{p \log(\frac{T}{\delta})}\,, \text{ for } 1\le t\le T+1\,.
\]
\end{propo}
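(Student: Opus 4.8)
The plan is to unroll the closed-loop recursion and reduce the claim to a geometric sum of the per-step noise bounds supplied by $\mathcal{E}_2$. Since $x(0)=0$ and during episode $i$ the dynamics read $x(t+1) = (A^0 - B^0 L\i)x(t) + w(t+1)$, writing $M(t)$ for the closed-loop matrix $A^0 - B^0 L\i$ active at time $t$, I would expand
\begin{equation*}
x(t) = \sum_{s=1}^{t} \Big(\prod_{j=s}^{t-1} M(j)\Big) w(s),
\end{equation*}
with the convention that the empty product (the term $s=t$) equals the identity. Taking norms and using submultiplicativity of the operator norm gives
\begin{equation*}
\|x(t)\| \le \sum_{s=1}^{t} \Big(\prod_{j=s}^{t-1}\|M(j)\|_2\Big)\, \|w(s)\|.
\end{equation*}

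The key step is to show that every factor satisfies $\|M(j)\|_2 \le \rho$. For the zeroth episode this is immediate, since $L^{(0)}$ is assumed $(\rho, C_{\min}, \alpha)$ identifiable and hence satisfies condition (1) of Definition~\ref{def:identifiable-reg}. For $i \ge 1$ I would argue under $\mathcal{E}_1$: both $\Theta^0$ and the chosen estimate $\tTheta\i$ lie in $\Omega\i = \{\Theta : \dist(\Theta, \hTh\i) \le 2^{-i}\eps\}$, so by the triangle inequality $\dist(\tTheta\i, \Theta^0) \le 2\cdot 2^{-i}\eps \le \eps$, i.e.\ $\tTheta\i \in \cN_\eps(\Theta^0)$. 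By the standing assumption, $L(\tTheta\i)$ is then $(\rho, C_{\min},\alpha)$ identifiable with respect to $\Theta^0$, and since $L\i = L(\tTheta\i)$, condition (1) yields $\|A^0 - B^0 L\i\|_2 \le \rho < 1$. Thus $\prod_{j=s}^{t-1}\|M(j)\|_2 \le \rho^{t-s}$ uniformly over the time-varying controller.

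Finally, on $\mathcal{E}_2$ we have $\|w(s)\| \le 2\sqrt{p\log(T/\delta)}$ for all $1 \le s \le T+1$, so
\begin{equation*}
\|x(t)\| \le 2\sqrt{p\log(T/\delta)} \sum_{s=1}^{t} \rho^{t-s} \le 2\sqrt{p\log(T/\delta)}\sum_{k=0}^{\infty}\rho^k = \frac{2}{1-\rho}\sqrt{p\log(T/\delta)},
\end{equation*}
which is the claimed bound for every $1 \le t \le T+1$. The only genuinely non-mechanical point is the uniform spectral-norm bound on the closed-loop matrices; once the confidence-set shrinkage places every $\tTheta\i$ inside $\cN_\eps(\Theta^0)$, condition (1) of identifiability does all the work, and the remainder is the standard geometric-series estimate.
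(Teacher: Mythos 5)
Your proof is correct and follows essentially the same route as the paper's: unroll the closed-loop recursion from $x(0)=0$, bound every closed-loop matrix in spectral norm by $\rho$ using the identifiability of $L(\Theta)$ for $\Theta\in\cN_\eps(\Theta^0)$ (which under $\mathcal{E}_1$ contains each $\tTheta\i$), and sum the geometric series with the noise bound from $\mathcal{E}_2$. Your triangle-inequality justification that $\Omega\i\subseteq\cN_\eps(\Theta^0)$ and your explicit treatment of episode $0$ merely spell out steps the paper leaves implicit.
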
 
\begin{proof}[Proof (Proposition~\ref{pro:x-bound})]{
Conditioning on the event $\mathcal{E}_1$, $\Theta^0 \in \Omega^{(i)}$ for $i\ge 1$. Furthermore, for all $i\ge 1$, $\Omega^{(i)} \subseteq \mathcal{N}_{\epsilon}(\Theta^0)$. Recall our assumption that for all $\Theta \in \mathcal{N}_{\epsilon}(\Theta^0)$, $L(\Theta)$ is identifiable with respect to $\Theta^0$. Consequently, we have $\|A^0 - B^0 L_t\|_2 \le \rho$, for all $t\ge 1$, where $L_t$ denotes the controller (used by \alg) at time $t$. Now, we write for $1\le t \le T+1$,
\begin{eqnarray}
\begin{split}
\|x(t)\| &= \|\sum_{t_1=1}^t \prod_{t_2  = t_1+1} ^t (A^0 - B^0 L_t) w(t_1)\| \le \sum_{t_1=1}^t \rho^{t-t_1} \|w(t_1)\|\\
&\le 2 \sqrt{p \log(\frac{T}{\delta})} \sum_{t_1=1}^t \rho^{t-t_1} < \frac{2}{1-\rho}\sqrt{p \log(\frac{T}{\delta})}\,,
\end{split}
\end{eqnarray}
where the second inequality holds since we are conditioning on $\mathcal{E}_2$.
}\end{proof}

Armed with this proposition, we prove Lemma~\ref{lem:C1}.

Define $z(t) = A^0 x(t) + B^0 u(t)$, and $K_t = K(\tTheta_t)$ for all $t \ge 0$. Since $x(0) = 0$, we have
\begin{align*}
C_1 &= \sum_{t=0}^{T} \bigg(x(t)^{\trans} K_t x(t) - \E\big[x(t+1)^{\trans} K_{t+1}x(t+1) \big|\F_t \big]\bigg)\\
&= - \E\big[x(T+1)^{\trans} K_{T+1}x(T+1) \big|\F_T \big] + \sum_{t=1}^{T} \bigg( x(t)^{\trans} K_t x(t) - \E \big[x(t)^{\trans}K_t x(t)|\F_{t-1}\big] \bigg) .
\end{align*}
Because $K_{T+1}$ is PSD, the first term is bounded above by zero.  To bound the second term, define
\[
\mathcal{E}_1^t = \{\Theta^0 \in \Omega_{\tau}, \text{ for } 1\le \tau \le t\}\,,
\quad 
\mathcal{E}_2^t = \{\|w(\tau)\| \le 2\sqrt{p \log(T/\delta)}, \text{ for }1\le \tau \le t  \}\,. 
\]
Note that $\mathcal{E}_1^{t+1} \subseteq \mathcal{E}_1^{t}$ and $\mathcal{E}_2^{t+1} \subseteq \mathcal{E}_2^{t}$. 
Following the approach of~\cite{abbasiregret}, it can be shown that the second term is bounded above by
\begin{align*}
\sum_{t=1}^{T} \ind_{\{\mathcal{E}_1^{t-1} \cap \mathcal{E}_2^{t-1}\}}   \bigg( x(t)^{\trans} K_t x(t) - \E \big[x(t)^{\trans}K_t x(t)|\F_{t-1}\big] \bigg)\,.
\end{align*}
Define the martingale
\[
M_{\tau} = \sum_{t=1}^{\tau} \ind_{\{\mathcal{E}_1^{t-1} \cap \mathcal{E}_2^{t-1}\}}   \bigg( x(t)^{\trans} K_t x(t) - \E \big[x(t)^{\trans}K_t x(t)|\F_{t-1}\big] \bigg), \quad M_0 = 0\,.
\]
Note that $M_{\tau}$ is a martingale since $\mathcal{E}_1^{t-1}$ and $\mathcal{E}_2^{t-1}$ are $\mathcal{F}_{t-1}$
 measurable. In addition,
 \begin{align*}
|M_{\tau} - M_{\tau-1}| &\le \ind_{\{\mathcal{E}_1^{t-1} \cap \mathcal{E}_2^{t-1}\}}\, \Big| x(t)^{\trans} K_t x(t) - \E \big[x(t)^{\trans}K_t x(t)|\F_{t-1}\big] \Big|\\
&\le 2 C\, \ind_{\{\mathcal{E}_1^{t-1} \cap \mathcal{E}_2^{t-1}\}}\, \|x(t)\|^2\\
&\le  \frac{8C}{(1-\rho)^2} {p \log(\frac{T}{\delta})}\, \ind_{\{\mathcal{E}_1^{t-1} \cap \mathcal{E}_2^{t-1}\}} \\
&\le \frac{8C}{(1-\rho)^2} {p \log(\frac{T}{\delta})}\,,
 \end{align*}
 where the penultimate inequality follows from Proposition~\ref{pro:x-bound}. Applying Azuma's inequality,
 \[
 \P(M_T - M_0 \ge \beta) \le \exp\left(- \frac{\beta^2(1-\rho)^4}{128\,T C^2 p^2 \log^2(\frac{T}{\delta})} \right)\,. 
 \]
  Hence, with probability at least $1-\delta$,
  \[
  C_1 \le \frac{\sqrt{128}\, C}{(1-\rho)^2} \sqrt{T}\, p \log(\frac{T}{\delta}) \sqrt{\log(\frac{1}{\delta})}\,.
  \]
\section{Proof of Lemma~\ref{lem:C2}}
If the confidence set is not updated at time $t+1$, i.e., $\Omega_t  = \Omega_{t+1}$, then $K(\tTheta_t) = K(\tTheta_{t+1})$ and the $t$-th term in the summation is zero. The way \alg chooses the lengths of the episodes, $\Delta \tau_i$, the number of updates (number of times \alg changes the policy) is at most $\log_4 T$ up to time $T$. Using the bound $\|K(\Theta_t)\|_2 \le C$, for $t \ge 1$, we have 
\begin{eqnarray}
\begin{split}
C_2 &= -\sum_{t=0}^T \E[x(t+1)^\trans (K(\tTheta_{t}) - K(\tTheta_{t+1})) x(t+1) | \mathcal{F}_t] \\
&\le \sum_{i: \tau_i \le T}  2C\, \|x_{\tau_i}\|^2\\
&\le \frac{8C}{(1-\rho)^2}  p \log (\frac{T}{\delta}) \log_4 T\,,
\end{split}
\end{eqnarray}
where we used Proposition~\ref{pro:x-bound} in the last step.

\section{Proof of Lemma~\ref{lem:C3}}
Let $y_t = [x_t^{\trans}, u_t^{\trans}]^{\trans} \in \reals^{q \times 1}$. We first establish the following proposition.
\begin{propo}\label{pro:C3}
Under the event $\mathcal{E}_1\cap \mathcal{E}_2$, 
The following holds true with probability at least $1 - \delta $:
\begin{eqnarray}
\sum_{t=0}^T \|(\Theta^0 - \tTheta_t) y_t\|^2 \le \frac{10}{(1-\rho)^2} p \eps^2 \log(\frac{T}{\delta}) (\log T)^2  n_1\,,
\end{eqnarray}
where $n_1$ is defined in \alg.
\end{propo}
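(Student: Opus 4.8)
The plan is to collapse the sum over $t$ into a sum over episodes and to exploit that within episode $i$ both the estimate $\tTheta\i$ and the controller $L\i$ are frozen. For $t\in[\tau_i,\tau_{i+1})$ we have $y_t=\tL\i x(t)$, so with $\Delta\i:=\Theta^0-\tTheta\i$ the episode contribution is
\[
\sum_{t=\tau_i}^{\tau_{i+1}-1}\|\Delta\i y_t\|^2=\sum_{u=1}^p\;\sum_{t=\tau_i}^{\tau_{i+1}-1}\big(g_u^\trans x(t)\big)^2,\qquad g_u:=(\tL\i)^\trans(\Delta\i_u)^\trans,
\]
a sum of $p$ scalar quadratic forms in the trajectory. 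On $\mathcal{E}_1$ both $\Theta^0$ and $\tTheta\i$ lie in $\Omega\i$, so $\dist(\Theta^0,\tTheta\i)\le 2\cdot 2^{-i}\eps$ and hence $\|\Delta\i_u\|_2\le 2^{-i+1}\eps$ for every row. Since $\Delta\tau_i=4^i(1+i/\log(q/\delta))n_1$, the product $\Delta\tau_i\,\dist(\Theta^0,\tTheta\i)^2$ is $O\big((1+i/\log(q/\delta))\,n_1\eps^2\big)$: the geometric growth $4^i$ of the episode length cancels the geometric shrinkage $4^{-i}$ of the confidence radius, which is what prevents the bound from growing with $T$.

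The heart of the argument is to show that each form concentrates around a mean that does \emph{not} scale with $p$. Because $x(t)$ is linear in the Gaussian noise, $\sum_t(g_u^\trans x(t))^2$ is a Gaussian quadratic form whose mean is $\sum_t g_u^\trans\mathrm{Cov}(x(t))g_u$. Up to a transient decaying at rate $\rho$, $\mathrm{Cov}(x(t))$ is the stationary covariance $\Lambda\i$ solving the Lyapunov equation \eqref{eq:lyapunov_equation} for $(\Theta^0,L\i)$, and $\|\Lambda\i\|_2\le(1-\rho^2)^{-1}$ together with $\|\tL\i\|_2^2\le 1+C^2$ (valid on $\mathcal{E}_1$, where $\|L\i\|_2\le C$) gives $g_u^\trans\mathrm{Cov}(x(t))g_u=O\big((1-\rho)^{-2}\|\Delta\i_u\|_2^2\big)$ with a constant independent of $p$. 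Summing the $p$ rows therefore produces a single factor of $p$. This is the crucial gain over the naive deterministic estimate $\|\Delta\i y_t\|^2\le\|\Delta\i\|_F^2\|y_t\|^2\le p\,\dist(\Theta^0,\tTheta\i)^2\cdot O\big(p(1-\rho)^{-2}\log(T/\delta)\big)$ obtained from Proposition~\ref{pro:x-bound}, which would cost an extra factor of $p$.

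To pass from the mean to a high-probability bound I would apply, to each form written as $\mathbf{w}^\trans\Psi_{u,i}\mathbf{w}$, the same Chernoff bound on $\sum_l\nu_l z_l^2$ used in the proof of Lemma~\ref{en:bddQij}, with the eigenvalue estimates of Lemma~\ref{th:matrix_proper_hess} (now scaled by $\|g_u\|_2^2$ rather than $\ell^2$) controlling $\max_l|\nu_l|$ and $\sum_l\nu_l^2$. A union bound over the $p$ rows and the $I=O(\log T)$ episodes present up to time $T$ then shows that, on $\mathcal{E}_1\cap\mathcal{E}_2$, every form is within a constant factor of its mean with probability at least $1-\delta$; this is where the $\log(pT/\delta)$ and one power of $\log T$ enter. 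Summing the per-episode bounds $O\big((1-\rho)^{-2}(1+i/\log(q/\delta))\,p\,n_1\eps^2\big)$ over $i\le I$ and absorbing the $(1+i/\log(q/\delta))$ terms into a further $\log T$ factor yields the stated bound; the shorter warm-up episode $i=0$ (length $n_0\le n_1$) is handled identically and is lower order.

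The main obstacle is precisely this concentration step, i.e.\ justifying the replacement of the empirical second-moment matrix $\sum_t x(t)x(t)^\trans$ by its dimension-free stationary value $\Delta\tau_i\Lambda\i$. Since $\Delta\tau_i$ is only polylogarithmic in $q$ and generally much smaller than $p$, this matrix is severely low-rank and is far from $\Delta\tau_i\Lambda\i$ in operator norm, so no direction-free comparison is available; the saving relies on testing it only against the $p$ prescribed directions $g_u$ and bounding each associated quadratic form separately, while controlling the non-stationary transient of $x(t)$ at each episode boundary through the previous episode's state bound in Proposition~\ref{pro:x-bound}.
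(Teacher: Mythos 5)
Your proposal reaches the right bound and shares the paper's skeleton --- the episode decomposition, the telescoping $\dist(\Theta^0,\tTheta\i)^2\,\Delta\tau_i=O\big(\eps^2(1+i/\log(q/\delta))\,n_1\big)$, the split of $x(t)$ into the transient from $x(\tau_i-1)$ (controlled via Proposition~\ref{pro:x-bound}) plus the within-episode noise, and the per-row accounting that produces a single factor of $p$ --- but your concentration step for the noise-driven part takes a genuinely different route. The paper never aggregates over time: it writes the noise contribution at time $t$ as $D_t\mathbf{w}_i$, notes that each coordinate $(D_{tr}\mathbf{w}_i)$ is conditionally Gaussian with variance at most $\dist(\Theta^0,\tTheta_t)^2/(1-\rho^2)$, and applies a scalar Gaussian tail bound with a union bound over all $r\in[p]$ and $t\le T$, paying a $\log(pT/\delta)$ at every time step; the factor $p$ comes from summing the $p$ coordinates. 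You instead form one Gaussian quadratic form $\mathbf{w}_i^\trans\Psi_{u,i}\mathbf{w}_i$ per row and episode, identify its conditional mean with $\Delta\tau_i\,(\Theta^0-\tTheta\i)_u\,\tL\i\Lambda\i(\tL\i)^\trans(\Theta^0-\tTheta\i)_u^\trans\le\Delta\tau_i\|(\Theta^0-\tTheta\i)_u\|_2^2(1+C^2)/(1-\rho^2)$ through the Lyapunov equation, and reuse the Chernoff machinery of Lemmas~\ref{en:bddQij} and~\ref{th:matrix_proper_hess}; this exploits averaging over time, needs a union bound over only $p\cdot O(\log T)$ events rather than $p\cdot T$, and makes transparent why the contribution per row is dimension-free. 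Your closing remark --- that the empirical second-moment matrix cannot be compared to $\Delta\tau_i\Lambda\i$ in operator norm at these sample sizes and must be tested only in the $p$ prescribed directions $g_u$ --- is exactly the right diagnosis, and is legitimate because $\tTheta\i$ and $L\i$ are $\F_{\tau_i}$-measurable, so conditionally the form is a genuine Gaussian chaos in $\mathbf{w}_i$. Two small caveats, neither of which affects the stated $p$, $T$, or $\eps$ scaling: the claim that each form is within a constant factor of its mean is optimistic, since the fluctuation term of order $\|g_u\|_2^2(1-\rho)^{-3/2}\sqrt{\Delta\tau_i\log(p\log T/\delta)}$ can exceed the mean by a factor $\sqrt{\log(pT/\delta)/((1-\rho)\Delta\tau_i)}$ (with the algorithm's choice of $n_1$ this only perturbs logarithmic factors); and you should make the cross term between the transient and the noise explicit via the $\|a+b\|^2\le2\|a\|^2+2\|b\|^2$ split the paper uses, since the conditional mean of your quadratic form is otherwise misstated when $x(\tau_i-1)\neq0$.
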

\begin{proof}[Proof (Proposition~\ref{pro:C3})]{
Write
\begin{eqnarray*}
\begin{split}
  \sum_{t = 0}^T \|(\Theta^0 - \tTheta_t) y_t\|^2 &= \sum_{i: \tau_i \le T} \sum_{t = \tau_i}^{\tau_{i+1}-1} \|(\Theta^0 - \tTheta_t) y_t\|^2 \\
  &= \sum_{i: \tau_i \le T} \sum_{t = \tau_i}^{\tau_{i+1}-1} \|(A^0 - B^0 L^{(i)} - \tA_t + \tB_t L^{(i)}) x_t\|^2 \\
 & \le  \sum_{i: \tau_i \le T} \sum_{t = \tau_i}^{\tau_{i+1}-1} 2\|(A^0-B^0 L\i - \tA_t+ \tB_t L\i) (A^0-B^0 L\i)^{t-\tau_i+1} x(\tau_i-1)\|^2 \\
 & \qquad + \sum_{i: \tau_i \le T} \sum_{t = \tau_i}^{\tau_{i+1}-1} 2 \|(A^0-B^0 L\i - \tA_t+ \tB_t L\i) \sum_{t_1 = \tau_i}^t (A^0-B^0 L^{(i)})^{t-t_1} w(t_1) \|^2\,,
\end{split}
\label{eq:predict_err_decomp}
\end{eqnarray*}
where we used 
\[
x(t) = (A^0 - B^0L^{(i)})^{t - \tau_i+1} x(\tau_i - 1)  + \sum_{t_1 = \tau_i}^t (A^0 -B^0 L^{(i)})^{t-t_1} w(t_1).
\]
We proceed by bounding the first term as follows:
\begin{eqnarray}\label{eq:bound-term-1}
\begin{split}
&\sum_{i: \tau_i \le T} \sum_{t = \tau_i}^{\tau_{i+1}-1} 2\|(A^0-B^0 L\i - \tA_t + \tB_t L\i) (A^0-B^0 L\i)^{t-\tau_i+1} x(\tau_i-1)\|^2\\
&\le \sum_{i: \tau_i \le T} \sum_{t = \tau_i}^{\tau_{i+1}-1} 2 \dist(\Theta^0, \tTheta_t)^2 \rho^{2(t - \tau_i +1)} \|x(\tau_i -1)\|^2\\
&\le \frac{8}{(1-\rho)^2} p \log(\frac{T}{\delta}) \sum_{t=0}^T  \dist(\Theta^0, \tTheta_t)^2\,.
\end{split}
\end{eqnarray}

To bound the second term define the matrix
\begin{equation}
 D_t = (A^0-B^0 L\i - \tA_t + \tB_t L\i) [I, (A^0-B^0 L)^{ 1 }, (A^0-B^0 L)^{2}, \dots, (A^0-B^0 L)^{t-\tau_i}]. 
\end{equation}
The second term can be written as $\sum_{t=0}^T 2\|D_t \mathbf{w}_i\|^2$ where $\mathbf{w}_i$ is the vector obtained by stacking all the noise vectors in episode $i$, i.e., 
\begin{equation*}
\mathbf{w}_i^\trans = [w(t)^\trans, w(t -1 )^\trans, \dotsc, w(\tau_{i})^\trans]^\trans\,.
\end{equation*}
Hence, the $r^{th}$ entry in the vector $D_t \mathbf{w}_i$ is a normal random variable with variance at most $\|D_{tr}\|^2$ where $D_{tr}$ is the $r^{th}$ row of matrix $D_t$ and
\begin{equation}
  \|D_{tr}\|^2 \le \frac{\d(\Theta^0, \tTheta_t)^2}{(1-\rho^2)} . 
\end{equation}
Using standard normal tail bound  we get
\begin{equation}
  \prob((D_{tr}\mathbf{w}_i)^2 \ge g_t ) \le \exp\Big(-\frac{(1-\rho^2)}{2 \d(\Theta^0, \tTheta_t)^2} g_t\Big). 
\end{equation}
Taking 
\begin{equation}
  g_t =  \frac{2\d(\Theta^0, \tTheta_t)^2}{(1-\rho^2)} \log(\frac{ p T}{\delta})\,,
\end{equation}
and applying union bound for $r \in [p]$, and $1\le t \le T$, we obtain 
\begin{equation}
  \prob\left((D_{tr}\mathbf{w}_i)^2 \le  g_t , \;\text{for}\; 1\le t \le T, r\in[p] \right) \ge 1 - \delta\,.
\end{equation}
Consequently, with probability at least $1-\delta$, the second term is bounded by
\begin{eqnarray}\label{eq:bound-term-2}
2\sum_{t=0}^T \|D_t \mathbf{w}_i\|^2 \le 2\sum_{t=0}^T p g_t \le \frac{4}{(1-\rho^2)}p \log(\frac{ p T}{\delta}) \sum_{t=0}^T \dist(\Theta^0, \tTheta_t)^2.
\end{eqnarray}
}
Finally, using Theorem~\ref{thm:learning} and the choice of episodes, we have
\[
\sum_{t=0}^T \dist(\Theta^0, \tTheta_t)^2  = \sum_{i:\tau_i \le T} (2^{-i}{\eps})^2 \Delta \tau_i
= \sum_{i:\tau_i \le T} (2^{-i}{\eps})^2 4^i \Big(1+ \frac{i}{\log(\frac{q}{\delta})}\Big) n_1
\le \eps^2 \Big(\log T + \frac{\log^2 T}{2 \log(\frac{q}{\delta})}\Big) n_1\,, 
\]
where we have used the fact that the number of episodes up to time $T$ is at most $\log T$.
Combining Eqs.~\eqref{eq:bound-term-1} and~\eqref{eq:bound-term-2}, we have
\begin{eqnarray}
\begin{split}
 \sum_{t = 0}^T \|(\Theta^0 - \tTheta_t) y_t\|^2
 &\le \Big\{\frac{8}{(1-\rho)^2} p \log(\frac{T}{\delta}) + \frac{4}{(1-\rho^2)}p \log(\frac{ p T}{\delta}) \Big\} 
 \sum_{t=0}^T  \dist(\Theta^0, \tTheta_t)^2\\
 &\le \frac{12}{(1-\rho)^2} p \log(\frac{pT}{\delta}) \eps^2 \Big(\log T + \frac{\log^2 T}{2 \log(\frac{q}{\delta})}\Big) n_1\\
 &\le \frac{20}{(1-\rho)^2} p \eps^2 \log(\frac{pT}{\delta}) (\log T)^2  n_1\,.
\end{split}
\end{eqnarray}
\end{proof}
\begin{coro}\label{coro:C3}
Using the value of $n_1$, defined in \alg, we have with probability at least $1-\delta$,
\begin{eqnarray}
\sum_{t= 0}^T \|(\Theta^0 - \tTheta_t) y_t\|^2 
\le \frac{8 \cdot 10^4 C^2}{(1-\rho)^3 C_{\min}^2} p k^2 \Big(1+\frac{k \eps^2}{(1-\rho)^2} \Big) \log(\frac{4kq}{\delta}) \log(\frac{pT}{\delta}) \log^2 T\,.
\end{eqnarray}
\end{coro}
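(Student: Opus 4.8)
The plan is to obtain Corollary~\ref{coro:C3} as a direct algebraic consequence of Proposition~\ref{pro:C3}: no new concentration argument is required, so the corollary inherits the same ``with probability at least $1-\delta$'' guarantee (on the event $\mathcal{E}_1 \cap \mathcal{E}_2$). Proposition~\ref{pro:C3} bounds $\sum_{t=0}^T \|(\Theta^0 - \tTheta_t) y_t\|^2$ by a multiple of $n_1$; concretely, the last display in its proof yields the constant $20$ (rather than the $10$ appearing in the statement), so I would start from
\begin{equation*}
\sum_{t=0}^T \|(\Theta^0 - \tTheta_t) y_t\|^2 \le \frac{20}{(1-\rho)^2}\, p\, \eps^2\, \log\Big(\frac{pT}{\delta}\Big) (\log T)^2\, n_1,
\end{equation*}
having also used $\log(T/\delta) \le \log(pT/\delta)$ to bring the logarithm into the form appearing in the target bound.

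Next I would substitute the value
\begin{equation*}
n_1 = \frac{4\cdot 10^3\, k^2\, \ell(\Theta^0,\eps)^2}{(1-\rho)\, C_{\min}^2}\Big(\frac{1}{\eps^2} + \frac{k}{(1-\rho)^2}\Big) \log\Big(\frac{4kq}{\delta}\Big)
\end{equation*}
prescribed by \alg. The one real simplification is that the external factor $\eps^2$ multiplies the bracket inside $n_1$, giving $\eps^2\big(\frac{1}{\eps^2} + \frac{k}{(1-\rho)^2}\big) = 1 + \frac{k\eps^2}{(1-\rho)^2}$, which removes the spurious $\eps$-dependence and reproduces exactly the factor $\big(1 + \frac{k\eps^2}{(1-\rho)^2}\big)$ in the corollary. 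Collecting powers then gives $(1-\rho)^{-3}$ (two factors from Proposition~\ref{pro:C3}, one from $n_1$), $k^2$ out front, the logarithmic product $\log(\frac{4kq}{\delta})\,\log(\frac{pT}{\delta})\,\log^2 T$, and the numerical prefactor $20 \cdot 4\cdot 10^3 = 8\cdot 10^4$.

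The last step is to replace $\ell(\Theta^0,\eps)^2$ by $C^2$, using the remark after the Assumption that $\ell(\Theta^0,\eps) \le \max(C,1)$; taking $C\ge 1$ without loss of generality (it already upper bounds $\sigma_L$ and $\sigma_K$), this gives $\ell(\Theta^0,\eps)^2 \le C^2$ and completes the bound. I do not anticipate a genuine obstacle here, since the argument is pure bookkeeping; the only points demanding care are honestly tracking the factor-of-two between the stated and derived constants of Proposition~\ref{pro:C3} and verifying that the three logarithmic factors combine into precisely $\log(\frac{4kq}{\delta})\log(\frac{pT}{\delta})\log^2 T$ rather than introducing an extra $\log$.
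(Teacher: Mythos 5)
Your proposal is correct and matches the paper's (implicit) proof: the corollary is obtained purely by substituting the algorithm's value of $n_1$ into the bound of Proposition~\ref{pro:C3}, and the constant $8\cdot 10^4 = 20\cdot 4\cdot 10^3$ confirms the paper likewise works from the factor $20$ and the $\log(pT/\delta)$ appearing in the final display of that proposition's proof rather than the slightly different constants in its statement. Your bookkeeping of the $\eps^2$ cancellation, the $(1-\rho)^{-3}$ power, the logarithmic factors, and the replacement $\ell(\Theta^0,\eps)^2 \le C^2$ all check out.
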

Now, we are ready to bound $C_3$.
\begin{eqnarray}\label{eqn:C3_bound}
\begin{split}
|C_3| &\le \sum_{t=0}^{T}  \bigg|\|K(\tTheta)^{1/2} \tTheta_t ^{\trans} y_t\|^2 -  \|K(\tTheta)^{1/2} \Theta^{0\trans} y_t\|^2\bigg|\\
&\le \bigg(\sum_{t=0}^T  \bigg\{\|K(\tTheta_t)^{1/2} \tTheta_t y_t\| - \|K(\tTheta_t)^{1/2} {\Theta}^0 y_t\|\bigg\}^2\bigg)^{1/2} \times\\&\quad \quad
\bigg(\sum_{t= 0}^T \bigg\{\|K(\tTheta_t)^{1/2} \tTheta_t y_t\| + \|K(\tTheta_t)^{1/2} {\Theta}^0 y_t\|\bigg\}^2\bigg)^{1/2} \\
&\le \bigg(\sum_{t= 0}^T  \|K(\tTheta_t)^{1/2} (\tTheta_t - {\Theta}^0) y_t\|^2\bigg)^{1/2} \times\\
&\quad \quad
\bigg(\sum_{t=0}^T  \bigg\{\|K(\tTheta_t)^{1/2} \tTheta_t y_t\| + \|K(\tTheta_t)^{1/2} {\Theta}^0 y_t\|\bigg\}^2\bigg)^{1/2} \\
&\le C^{1/2}\bigg(\sum_{t= 0}^T \|(\tTheta_t - {\Theta}^0) y_t\|^2\bigg)^{1/2} \times C  \bigg(\sum_{t=0}^T \|y_t\|^2 \bigg)^{1/2}.
\end{split}
\end{eqnarray}
Corollary~\ref{coro:C3} provides an upper bound for the first term on the right hand side. In addition,
\begin{eqnarray}
\begin{split}
\sum_{t= 0}^T  \|y_t\|^2 &\le \sum_{t= 0}^T (1+\sigma(L_t)^2) \|x_t\|^2\\
 &\le (1+\sigma(L_t)^2) \frac{4}{(1-\rho)^2}p \log(\frac{T}{\delta}) T \\
 &\le \frac{4(1+C^2)}{(1-\rho)^2} p \log(\frac{T}{\delta}) T\,.
\end{split}
\end{eqnarray} 
Here, the first inequality follows from Proposition~\ref{pro:x-bound}.
Combining the bounds for the terms on the right hand side of Eq.~\eqref{eqn:C3_bound}, we obtain
\begin{eqnarray}
|C_3| \le 800 \Big(\frac{C}{1-\rho}\Big)^{\frac{5}{2}} k \sqrt{\Big(1+\frac{k \eps^2}{(1-\rho)^2}\Big)}  
\cdot \frac{1+C}{C_{\min}}  \cdot \log(\frac{pT}{\delta})
\sqrt{\log(\frac{4kq}{\delta})} \cdot p \log T \sqrt{T}\,. 
\end{eqnarray}
%

\section{Proof of Lemma~\ref{lem:Xbound}}
We first show that $\P(\mathcal{E}_1) \ge 1 -\delta$. According to Theorem~\ref{thm:learning}, the sample complexity scales with $(1/\eps^2) \log(q/\delta)$. Due to the choice of episode lengths in the algorithm, namely $\Delta \tau_i =  4^i (1+i/\log(q/\delta)) n_1$, with probability at least $1-\delta/2^i$, we have $\dist(\Theta_0,\hTh^{\i}) \le 2^{-i}\eps$ and thus $\Theta_0 \in \Omega^{\i}$.

Now by applying union bound for $i \ge 1$,   
\begin{eqnarray}
\P(\mathcal{E}_1) \ge 1-\sum_{i=1}^\infty \frac{\delta}{2^i} = 1 - {\delta}.
\end{eqnarray}

Next we prove the lower bound for the probability of event $\mathcal{E}_2$.
%
Let $w(t) \in \reals^p$ be the noise vector at time $t$ with i.i.d standard normal entries. 
For any $t \ge 1$ and any $\lambda$, we have
\begin{eqnarray}
\begin{split}
\P\{\|w(t)\|^2 \ge \lambda p \} &= \P\{e^{\theta \sum_{i=1}^p w_i(t)^2} \ge e^{\theta \lambda p}\}\\
&\le e^{-\theta \lambda p} \prod_{i=1}^p \E\{e^{\theta w_i^2(t)}\} \\
&=\exp(-p\{\lambda \theta + \frac{1}{2} \log(1-2\theta)\})\\
&\le \exp(-p\{\lambda \theta -\theta -2\theta^2\}),\quad\quad \text{for } 0< \theta <1
\end{split}
\end{eqnarray}
where we used the fact that if $|x| <1$, then $\log(1-x) > -x -x^2$. Choosing $\theta = 1/2$, and $\lambda = 4 \log(T/\delta)$, we obtain
\begin{eqnarray*}
\P\{\|w(t)\|^2 \ge  4 p \log({T}/{\delta}) \} \le \exp(-p \log({T}/{\delta})) = (\frac{T}{\delta})^{-p}.
\end{eqnarray*}
Finally, by applying union bound for $1\le t \le T+1$,
\begin{eqnarray}
\begin{split}
\P(\mathcal{E}_2) &= \P\{\|w(t)\| \le 2\sqrt{{p} \log(T/{\delta})},\, \text{ for } 1\le t\le T+1 \} \\
&\ge 1 - (T+1) (\frac{T}{\delta})^{-p} > 1- \delta\,.
\end{split}
\end{eqnarray}

%
%
%


\end{document}